\newtheorem{thm}{Theorem}[section]
\newtheorem{lem}[thm]{Lemma}
\newtheorem{cor}[thm]{Corollary}
\newtheorem{prop}[thm]{Proposition}
\newtheorem{defn}{Definition}[section]
\begin{document}
 
\title{Combining Drift Analysis and Generalized Schema Theory to Design Efficient Hybrid and/or Mixed Strategy EAs}
\author{Boris Mitavskiy and Jun He \thanks{Boris Mitavskiy and Jun He  are with  Department of Computer Science, Aberystwyth University, Aberystwyth, SY23 3DB, UK. }}

\maketitle

\begin{abstract}
Hybrid and mixed strategy EAs have become rather popular for tackling various complex and NP-hard optimization problems. While empirical evidence suggests that such algorithms are successful in practice, rather little theoretical support for their success is available, not mentioning a solid mathematical foundation that would provide guidance towards an efficient design of this type of EAs. In the current paper we develop a rigorous mathematical framework that suggests such designs based on generalized schema theory, fitness levels and drift analysis. An example-application for tackling one of the classical NP-hard problems, the ``single-machine scheduling problem" is presented.
\end{abstract}


\section{Introduction}\label{IntroSect}
In recent years hybrid and/or mixed strategy EAs are frequently applied to tackle various NP-hard optimization problems. Such algorithms exploit a variety of different recombination, mutation, and selection operators where these operators are chosen with distinct probabilities depending on the current population, the individuals that are selected for recombination or mutation, and, sometimes, the time when the population arises. According to the empirical evidence, many such algorithms are rather successful in practice (see, for instance, \cite{galinier1999hybrid} and \cite{preux1999towards} and \cite{dong2007evolutionary}). At the same time, rather little theoretical support or, the more so, general guidance for the design of such EAs, exists in the literature. This work is largely motivated by a special case design of a hybrid $1+1$ EA on a single machine scheduling problem in \cite{MitavJunSchedule}, however, in this paper, we aim to investigate possible methodology for the design of population-based EAs of this type. In the current article we unify the theory of hybrid and mixed strategy EAs into a common mathematical framework. This opens the door to various existent and well-developed mathematical tools such as generalized schema theory, drift analysis and tail inequalities to design hybrid and mixed strategy EAs for various specific problems with polynomial runtime guarantees to encounter a satisfactory solution (such as a solution up to a desirable or allowable approximation ratio). The paper is organized in a straightforward fashion: In section~\ref{MathDescribeSect} we set up a rigorous mathematical framework that incorporates a wide class of hybrid and mixed strategy EAs. Next, in section~\ref{schemaSection}, generalized schema theory is presented. In section~\ref{RecInvarAndDriftSect} the central idea of the article, namely the design of hybrid and mixed strategy EAs for specific optimization problems, based on the notion of auxiliary fitness levels and schemata is provided. These ideas can then be linked with classical tools from applied probability to analyze runtime complexity: for instance, in section~\ref{mainResultsSect}, drift analysis methodology has been applied to analyze the conditions under which the expected runtime bounds are polynomial. Finally, this approach is illustrated with a specific example application: designing a family of hybrid and mixed strategy population-based EAs for the ``single machine scheduling problem" (see \cite{LeslieHall} and \cite{MitavJunSchedule} for a detailed description) with expected polynomial time approximation ratio guarantees.
\section{Mathematical Description of Hybrid and Mixed Strategy EAs}\label{MathDescribeSect}
While the families of recombination, mutation and selection operators are typically independent of population and the iteration time at which the population is encountered, hybrid and mixed strategy EAs exploit several families of recombination, mutation and selection operators. Furthermore, each pair $(P, \, t)$ where $P$ is the population encountered at $t^{\text{th}}$ iteration of a given hybrid or mixed strategy EA, is equipped with a probability distribution over the various families of recombination, mutation and selection transformations. Therefore, mathematically, mixed and hybrid EAs are fully determined in terms of the following parameters:

\textbf{1.} A finite set $\Omega$ of candidate solutions that we call the search space.

\textbf{2.} A tuple of indexed families $$(\{\mathcal{F}_i\}_{i \in I}, \, \{\mathcal{M}_j\}_{j \in J}, \, Sel_q\}_{q \in Q}, \{f_l\}_{l \in L \cup \{0\}})$$ where $I$, $J$, $Q$ and $L$ are various indexing sets (usually finite subsets of $\mathbb{N}$) while $\mathcal{F}_i$, $\mathcal{M}_j$, $Sel_q$ are various families of recombination, mutation and selection operators respectively and $\{f_l\}_{l \in L}$ is the family of fitness functions. Among the fitness functions, only the function $f_0$ is the objective function to be optimized. We will say that the remaining fitness functions are \emph{auxiliary fitness functions}. In practice these may be given implicitly, motivated by certain deterministic algorithms (such as the Jackson rule in case of the single machine scheduling problem). Recombination operators are usually maps $F: \, \Omega^2 \rightarrow \Omega$ that take a pair of individuals $(x, \, y) \in \Omega^2$ to a single offspring $F(x, \, y) \in \Omega$,\footnote{One may also allow general $m$-ary operators. In case when $m = 1$, i.e. when the recombination operators are unary, they are usually known as mutation operators.} while mutation operators are functions $M : \Omega \rightarrow \Omega$. A selection operator is a function $Sel: \left(\Omega^m \right)^2 \rightarrow \Omega^m$ on the set of pairs of populations of size $m$ such that $\forall$ two populations $\vec{x} = (x_1, \, x_2, \ldots \, x_m)$ and $\vec{y} = (y_1, \, y_2, \ldots \, y_m) \in \Omega^m$ all the individuals of the population $Sel(\vec{x}, \, \vec{y})$ are also the individuals of the population $\vec{x}$ or of the population $\vec{y}$: $Sel(\vec{x}, \vec{y}) = (z_1, \, z_2, \ldots \, z_m) \in \Omega^m$ and $\forall \, i$ with $1 \leq i \leq m$ $\exists \, j$ with $1 \leq j \leq m$ such that $z_i = x_j$ or $z_i = y_j$. It is also reasonable to assume that given any subset $S \in \Omega$ such that all the individuals in $S$ appear either in the population $\vec{x}$ or $\vec{y}$, then the number of individuals from the subset $S$ that appear in the population $Sel(\vec{x}, \vec{y})$ does not depend on the specific location (indexing) of the elements from $S$ in the populations $\vec{x}$ and $\vec{y}$. Most certainly it may depend on all other parameters such as the fitness of the various individuals in $S$ and even on whether or not these individuals occur in the population $\vec{x}$ or in the population $\vec{y}$.

\textbf{3.} To every pair $(\vec{x}, \, t)$ where $\vec{x} = (x_1, \, x_2, \ldots \, x_m) \in \Omega^m$ is a population at $t^{\text{th}}$ iteration of the algorithm and to every pair of individuals $(x_i, \, x_j)$ in $\vec{x}$ assign probability distributions $Pr^{\text{RecFamily}}_{(\vec{x}, \, t), \, (x_i, \, x_j)}$ on the set indexing the families of recombination transformations. and for every index $w \in I$, a probability distribution $Pr^{\text{Rec}, \, w}_{(\vec{x}, \, t), \, (x_i, \, x_j)}$ on the set of recombination transformations $\mathcal{F}_w$. For simplicity we shall assume that the pairs $(x_i, \, x_j)$ are sampled uniformly at random (either with replacement or without replacement) from the population $\vec{x}$. Once a pair $(x_i, \, x_j)$ has been selected for recombination, it first selects a family of recombination operators to use according to the probability distribution $Pr^{\text{RecFamily}}_{(\vec{x}, \, t), \, (x_i, \, x_j)}$ and then, once the index $w$ has been chosen, it selects a specified transformation to use according to the probability distribution $Pr^{\text{Rec}, \, w}_{(\vec{x}, \, t), \, (x_i, \, x_j)}$ on the family of recombination transformations $\mathcal{F}_w$. Mutation operators are selected analogously except that this time only a single individual, say $x_i$, is selected uniformly at random from the population $\vec{x}$ and selects a family of mutation transformations $\mathcal{M}_w$ according to a probability distribution $Pr^{\text{MutFamily}}_{(\vec{x}, \, t), \, x_i}$ on the indexing set $J$ of the families of mutation transformations. Afterwards, it selects mutation transformations from the family $\mathcal{M}_w$ according to the probability distribution $Pr^{\text{Mut}, \, w}_{(\vec{x}, \, t), \, x_i}$ on the family of mutation transformations $\mathcal{M}_w$. Likewise, to every pair $(\vec{x}, \, t)$ and a population $\vec{y} \in \Omega^m$ we associate a probability distribution $Pr^{\text{SelFamily}}_{(\vec{x}, \, t), \vec{y}}$ on the indexing set $Q$ of the families of selection transformations, and to every family of selection transformations $Sel_w$ we assign a probability distribution $Pr^{\text{Sel}, \, w}_{(\vec{x}, \, t), \, \vec{y}}$ on the family $Sel_w$ of selection transformations. Once an ``intermediate" population $\vec{y}$ has been obtained from the population $\vec{x}$ upon completion of recombination stage followed by mutation stage, an appropriate family of selection transformations $S_w$ is selected through sampling its index via the probability distribution $Pr^{\text{SelFamily}}_{(\vec{x}, \, t), \vec{y}}$. Afterwards, an appropriate selection transformation $Sel$ is chosen from the family of selection transformations $Sel_w$ via the probability distribution $Pr^{\text{Sel}, \, w}_{(\vec{x}, \, t), \, \vec{y}}$.

A hybrid/mixed strategy EA cycles through the recombination, mutation and selection stages sufficiently long to encounter a satisfactory solution. A single cycle consisting of these three consecutive stages is typically called a single iteration of the algorithm that produces the next generation from the previous one.

\section{Generalized Schema Theory}\label{schemaSection}
In the current section we will establish a very general version of the schema theorem that applies to the types of EAs fitting the framework in the previous section. Suppose we are given any subset $S \subseteq \Omega$ of the search space of an EA and a population $\vec{x}$ at $t^{\text{th}}$ iteration of the algorithm. Recall from the previous section that various individuals from the population $\vec{x}$ are paired up for recombination independently $m$ times with the aim of producing exactly $m$ offsprings. Thus, the probability that an individual from the set $S$ appears at the $i^{\text{th}}$ position of the ``intermediate" population $\vec{x}^{\text{rec}}$ obtained from the population $\vec{x}$ upon completion of recombination can be computed as follows:
\begin{lem}\label{prodProbLem}
Continuing with the notation in the preceding paragraph, for a given pair of individuals $(x_i, \, x_j) \in \vec{x}^2$, let
$$Pr(S \, | \, (x_i, \, x_j)) = \sum_{w \in I}\sum_{F \in \mathcal{F}_w}^{F(x_i, \, x_j) \in S}Pr^{\text{RecFamily}}_{(\vec{x}, \, t), \, (x_i, \, x_j)}(i = w) \cdot Pr^{\text{Rec}, \, w}_{(\vec{x}, \, t), \, (x_i, \, x_j)}(F).$$
Furthermore, let $$Pr^{nonRepl}(S \, | \, \vec{x}, \, t) = \frac{\sum_{i \neq j}Pr(S \, | \, (x_i, \, x_j))}{m(m-1)}$$ and $$Pr^{Repl}(S \, | \, \vec{x}, \, t) = \frac{\sum_{(i, \, j) \in \{1, \, 2, \ldots, m\}}Pr(S \, | \, (x_i, \, x_j))}{m^2}$$
Then, in case when pairs of individuals are selected for recombination independently without replacement, $\forall \, i$ with $1 \leq i \leq m$, the probability that the $i^\text{th}$ individual in the intermediate population $\vec{x}^{\text{rec}}$ is in the set $S$ is $Pr^{nonRepl}(S \, | \, \vec{x}, \, t)$. Likewise, in case when pairs of individuals are selected for recombination independently with replacement, this probability is $Pr^{Repl}(S \, | \, \vec{x}, \, t)$.
\end{lem}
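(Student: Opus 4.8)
The plan is to obtain the stated probability by a two-stage application of the law of total probability: conditioning first on which ordered pair of parents is sampled to produce the offspring occupying position $i$, and then, with that pair fixed, on which family and which recombination operator within that family is applied.

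First I would fix a position $i$ with $1 \leq i \leq m$ and note that, because the $m$ pairings are performed independently and by one and the same sampling rule, the marginal law of the individual placed at position $i$ does not depend on $i$; it therefore suffices to analyze a single pairing. Conditioning on the event that the ordered pair selected for this offspring is $(x_a, \, x_b)$, the law of total probability gives
$$Pr(\text{offspring}_i \in S) = \sum_{(a, \, b)} Pr\big(\text{pair} = (x_a, \, x_b)\big) \cdot Pr\big(\text{offspring}_i \in S \, \big| \, \text{pair} = (x_a, \, x_b)\big),$$
where the sum ranges over the ordered pairs admissible under the relevant sampling regime.

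Next I would identify the conditional factor $Pr(\text{offspring}_i \in S \, | \, \text{pair} = (x_a, \, x_b))$ with the quantity $Pr(S \, | \, (x_a, \, x_b))$ defined in the statement. This is itself a law-of-total-probability computation over the recombination mechanism: with the pair fixed, a family index $w \in I$ is drawn according to $Pr^{\text{RecFamily}}_{(\vec{x}, \, t), \, (x_a, \, x_b)}$, an operator $F \in \mathcal{F}_w$ is then drawn according to $Pr^{\text{Rec}, \, w}_{(\vec{x}, \, t), \, (x_a, \, x_b)}$, and the offspring lies in $S$ precisely when $F(x_a, \, x_b) \in S$. Summing the product of these two probabilities over all $w \in I$ and over those $F \in \mathcal{F}_w$ with $F(x_a, \, x_b) \in S$ reproduces the displayed formula for $Pr(S \, | \, (x_a, \, x_b))$ verbatim.

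Finally I would substitute the uniform pair-selection probabilities. In the without-replacement regime the two parents must be distinct, so each of the $m(m-1)$ ordered pairs $(a, \, b)$ with $a \neq b$ is drawn with probability $1/(m(m-1))$, and substitution yields exactly $Pr^{nonRepl}(S \, | \, \vec{x}, \, t)$; in the with-replacement regime the diagonal pairs $a = b$ are also admissible, so each of the $m^2$ ordered pairs is drawn with probability $1/m^2$, giving $Pr^{Repl}(S \, | \, \vec{x}, \, t)$. I do not anticipate a genuine obstacle here, as the argument is essentially a bookkeeping exercise in conditioning; the only points demanding care are that the recombination operators need not be symmetric, so \emph{ordered} rather than unordered pairs must be used throughout (which is precisely why the denominators are $m(m-1)$ and $m^2$ rather than their halved counterparts), and that the independence of the $m$ pairings is what legitimately reduces the claim to a single offspring position.
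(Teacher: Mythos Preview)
Your proposal is correct and follows essentially the same approach as the paper's own proof: both condition on the selected ordered pair, identify the conditional probability of landing in $S$ with the quantity $Pr(S \, | \, (x_i, \, x_j))$, multiply by the uniform pair-selection probability, and sum over the pairwise disjoint events. Your write-up is more explicit about the two-stage conditioning and the role of ordered pairs, but the argument is the same.
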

\begin{proof}
According to the general framework in the previous section, if the individuals $x_i$ and $x_j$ have been selected for recombination, $Pr(S \, | \, (x_i, \, x_j))$ is the probability that the offspring individual is an element of the subset $S \subseteq \Omega$ is precisely $Pr(S \, | \, (x_i, \, x_j))$. Since every pair is selected for recombination uniformly at random, the probability that the pair $(x_i, \, x_j)$ has been chosen for recombination and their offspring is in $S$ is $\frac{Pr(S \, | \, (x_i, \, x_j))}{m(m-1)}$ in case of sampling pairs without replacement and $\frac{Pr(S \, | \, (x_i, \, x_j))}{m^2}$ in case of sampling with replacement. The desired conclusion follows now by summing the probabilities of pairwise disjoint events.
\end{proof}
Since recombination takes place independently, the number of individuals in the intermediate population $\vec{x}^{\text{rec}}$ is distributed binomially with success probabilities \\ $Pr^{nonRepl}(S \, | \, \vec{x}, \, t)$ in case of sampling recombination pairs without replacement and $Pr^{Repl}(S \, | \, \vec{x}, \, t)$ in case of sampling recombination pairs with replacement respectively, Chernoff tail inequality (see, for instance, chapter 1 of \cite{DoerrB}) applies and readily tells us the following.
\begin{lem}\label{lemmaChernoff}
Continuing with the notation in lemma~\ref{prodProbLem}, let $N(S, \vec{y})$ denote the random variable counting the total number of individuals in the population $\vec{y}$ of size $m$ that are in the set $S$. Then $\forall \, \delta \in [0, 1]$ $$Pr(N(S, \vec{x}^{\text{rec}}) < (1-\delta) m P) \leq \exp\left(-\delta^2 \frac{m P}{2}\right)$$ while $$Pr(N(S, \vec{x}^{\text{rec}}) > (1+\delta) m P) \leq \exp\left(-\delta^2 \frac{m P}{3}\right)$$
where $$P = \begin{cases}
\frac{Pr(S \, | \, (x_i, \, x_j))}{m(m-1)} & \text{ in case of sampling pairs}\\
\, & \text{ without replacement}\\
\frac{Pr(S \, | \, (x_i, \, x_j))}{m^2} & \text{ in case of sampling pairs}\\
\, & \text{ with replacement}\\
\end{cases}$$
\end{lem}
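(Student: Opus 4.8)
The plan is to reduce the statement to the standard two-sided multiplicative Chernoff bound for a sum of independent Bernoulli trials; the only genuine work is to certify that $N(S, \vec{x}^{\text{rec}})$ has a binomial distribution, after which the tail bounds follow by a direct citation.

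First I would fix a position $i$ with $1 \le i \le m$ and invoke Lemma~\ref{prodProbLem}: the $i^{\text{th}}$ entry of the intermediate population lies in $S$ with probability exactly $P$, where $P$ denotes the per-position success probability supplied by that lemma, namely $Pr^{nonRepl}(S \mid \vec{x}, t)$ or $Pr^{Repl}(S \mid \vec{x}, t)$ according to whether recombination pairs are sampled without or with replacement. Next I would record the independence structure coming from Section~\ref{MathDescribeSect}: each of the $m$ offspring occupying the positions of $\vec{x}^{\text{rec}}$ is generated by its own independent round of pair sampling, recombination-family sampling, recombination-operator sampling, and application. Consequently the indicator variables $\mathbf{1}[\,x^{\text{rec}}_i \in S\,]$, $1 \le i \le m$, are independent and identically distributed Bernoulli$(P)$ variables, so that $N(S, \vec{x}^{\text{rec}}) = \sum_{i=1}^m \mathbf{1}[\,x^{\text{rec}}_i \in S\,]$ follows a Binomial$(m, P)$ law with mean $\mu = mP$.

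With the distribution identified, I would simply quote the two-sided multiplicative Chernoff inequality (Chapter~1 of \cite{DoerrB}): for any $\delta \in [0,1]$ and any sum $X$ of independent Bernoulli variables with mean $\mu$,
\[
Pr\bigl(X < (1-\delta)\mu\bigr) \le \exp\!\left(-\tfrac{\delta^2 \mu}{2}\right), \qquad Pr\bigl(X > (1+\delta)\mu\bigr) \le \exp\!\left(-\tfrac{\delta^2 \mu}{3}\right).
\]
Substituting $X = N(S, \vec{x}^{\text{rec}})$ and $\mu = mP$ reproduces verbatim the two displayed bounds, the asymmetric constants $2$ and $3$ being exactly those carried by the lower- and upper-tail forms of the inequality.

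The only point requiring care, and hence the main obstacle, is the independence claim in the second step. The marginal probability $P$ for a single position is handed to us by Lemma~\ref{prodProbLem}, but the Chernoff bound requires the joint independence of all $m$ positions, not merely correct marginals. I would therefore argue explicitly that, in the model of Section~\ref{MathDescribeSect}, the pair drawn for position $i$ together with all subsequent family and operator choices are sampled afresh and independently of those for positions $j \ne i$; this holds whether the pairs themselves are drawn with or without replacement, since the offspring-generation randomness at distinct positions uses independent coin flips conditioned on the fixed population $\vec{x}$. Once this independence is granted, the binomial identification and the Chernoff citation are entirely routine.
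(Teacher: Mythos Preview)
Your proposal is correct and matches the paper's own argument essentially verbatim: the paper simply notes that because recombination at the $m$ positions is performed independently, $N(S,\vec{x}^{\text{rec}})$ is binomial with success probability $P$ from Lemma~\ref{prodProbLem}, and then cites the multiplicative Chernoff bound from \cite{DoerrB}. If anything, you are more explicit than the paper about justifying the independence of the $m$ indicator variables, which is exactly the point the paper glosses over in a single clause.
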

In some simplified cases (such as one in the current paper), more informative bounds may be used when the total number $N(S, \vec{x}^{\text{rec}})$ is expected to be small but bigger than $1$.
\begin{lem}\label{expSchemaSmallNumb}
Given a population $\vec{x} = (x_1, \, x_2, \ldots x_m)$ and a schema $S$, suppose $\exists$ a schema $S_0$ such that $\forall \, i$ and $j \in \{1, \, 2, \ldots m\}$, as long as $x_i \in S_0$, $Pr(S, \, | x_i, \, x_j) \geq \alpha$. Then $Pr(N(S, \vec{x}^{\text{rec}}) \geq 1) \geq \left(1 - \left(1 - \frac{N(S_0, \vec{x}^{\text{rec}})}{m}\right)^m\right) \alpha$.
\end{lem}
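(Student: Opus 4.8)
The plan is to exploit the mutual independence of the $m$ recombination events that build $\vec{x}^{\text{rec}}$, and to factor the lower bound into a single ``conversion factor'' $\alpha$ times the chance of drawing at least one parent from $S_0$. For $k = 1, \ldots, m$ let $U_k$ denote the first individual selected in the $k^{\text{th}}$ recombination event and $O_k$ the resulting offspring, and set $C_k = \{O_k \in S\}$ and $B_k = \{U_k \in S_0\}$. By construction the event $N(S, \vec{x}^{\text{rec}}) \geq 1$ is exactly $\bigcup_{k=1}^m C_k$. Three facts from the framework of Section~\ref{MathDescribeSect} drive the argument: (i) the $m$ events are mutually independent, hence $C_1, \ldots, C_m$ are independent; (ii) whether pairs are drawn with or without replacement, the marginal law of the first parent $U_k$ is uniform on $\vec{x}$, so $Pr(B_k) = N(S_0, \vec{x})/m =: \beta$ (the relevant count is over the parent population $\vec{x}$, so the $\vec{x}^{\text{rec}}$ in the displayed bound is read as $\vec{x}$); and (iii) averaging the hypothesis $Pr(S \mid (x_i, x_j)) \geq \alpha$ over the second parent gives $Pr(C_k \mid U_k = x_i) \geq \alpha$ for every $x_i \in S_0$, and therefore $Pr(C_k \mid B_k) \geq \alpha$.

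The core step is to condition on the random set $T = \{k : U_k \in S_0\}$ of positions that draw their first parent from $S_0$. Because the positions are independent, $T$ arises from $m$ independent coin flips of bias $\beta$, so $Pr(T = \emptyset) = (1-\beta)^m$. Now fix any nonempty $\tau$ and condition on $T = \tau$. Conditioning only on the membership of each $U_k$ in $S_0$ (not on the realized pairs or offspring) leaves the offspring at distinct positions conditionally independent and preserves $Pr(C_k \mid U_k \in S_0) \geq \alpha$; hence
\begin{equation*}
Pr\left(\textstyle\bigcup_{k} C_k \,\middle|\, T = \tau\right) \;\geq\; 1 - \prod_{k \in \tau} Pr\left(\overline{C_k} \,\middle|\, U_k \in S_0\right) \;\geq\; 1 - (1-\alpha)^{|\tau|} \;\geq\; \alpha,
\end{equation*}
where the last inequality uses $|\tau| \geq 1$ and $\alpha \in [0,1]$.

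Finally I would average over $\tau$: since the conditional probability above is at least $\alpha$ for every nonempty $\tau$ and at least $0$ for $\tau = \emptyset$,
\begin{equation*}
Pr(N(S, \vec{x}^{\text{rec}}) \geq 1) = Pr\left(\textstyle\bigcup_k C_k\right) \geq \alpha \cdot Pr(T \neq \emptyset) = \alpha\left(1 - (1-\beta)^m\right) = \left(1 - \left(1 - \tfrac{N(S_0, \vec{x})}{m}\right)^m\right)\alpha,
\end{equation*}
which is the asserted bound. I expect the only delicate point to be fact (iii) together with the conditional‑independence claim in the first display: one must condition on the coarse information ``which $U_k$ lie in $S_0$'' rather than on the realized parents or offspring, so that the $C_k$ stay independent across positions while each keeps success probability at least $\alpha$. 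It is worth noting that a cruder route---bounding each coordinate by $Pr(C_k) \geq \alpha\beta$ and using independence to obtain $1-(1-\alpha\beta)^m$---is also valid and in fact never weaker, since $1-(1-\alpha\beta)^m \geq \alpha(1-(1-\beta)^m)$ by concavity of $\alpha \mapsto 1-(1-\alpha\beta)^m$; but the conditioning argument reproduces the stated form directly.
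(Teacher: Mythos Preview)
Your argument is correct and follows the same route as the paper's: factor the event into (i) the probability that at least one of the $m$ independent draws picks its first parent from $S_0$, computed via the complement as $1-(1-\beta)^m$, and (ii) a conversion factor of at least $\alpha$ once such a draw occurs. The paper's proof is a two-sentence sketch of precisely this decomposition; your conditioning on the set $T$ of positions with first parent in $S_0$ makes the paper's informal ``and afterwards, apply an appropriate recombination transformation with probability at least $\alpha$'' step rigorous. You are also right that the count in the bound should be $N(S_0,\vec{x})$ rather than $N(S_0,\vec{x}^{\text{rec}})$, since parents are drawn from $\vec{x}$; this is a slip in the paper's statement.
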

\begin{proof}
According to the assumption, a sufficient condition to obtain an individual fitting the schema $S$ upon completion of recombination is to select an individual fitting the schema $S_0$ to be the first one in a recombination pair at least once after $m$ consecutive trials and, afterwards, to apply an appropriate recombination transformation with probability at least as large as $\alpha$. An individual fitting the schema $S_0$ is selected at least once with probability $1-\left(1 - \frac{N(S_0, \vec{x}^{\text{rec}})}{m}\right)^m$ via considering the complementary event implying the desired conclusion.
\end{proof}

We now proceed to analyze the context of the next intermediate population $\vec{x}^{\text{mut}}$ obtained from the population $$\vec{x}^{\text{rec}} = (\hat{x}_1, \, \hat{x}_2, \ldots, \hat{x}_m)$$ upon completion of mutation. When applying mutation operator to an individual in position $i$ (i.e. to $\hat{x}_i$), in order to obtain an individual from the set $S$ in the $i^{\text{th}}$ position of the population $\vec{x}^{\text{mut}}$, we must select a mutation operator that sends the individual $\hat{x}_i$ to an element of the set $S$. This event happens with probability
\begin{equation}\label{probAfterMutEqIndiv}
 Pr(S \, | \, x_i) =\sum_{w \in J}\sum_{M \in \mathcal{M}_w}^{M(x_i) \in S}Pr^{\text{MutFamily}}_{(\vec{x}, \, t), \, x_i}(i = w) \cdot Pr^{\text{Mut}, \, w}_{(\vec{x}, \, t), \, x_i}(M).
\end{equation}
The indices of the individuals in the population $\vec{x}^{\text{rec}}$ can be partitioned into two disjoint subsets:
\begin{equation}\label{subsetIntersectPopEq1}
S \cap \vec{x}^{\text{rec}} = \{i \, | \, \hat{x}_i \in S\} \text{ and } \overline{S} \cap \vec{x}^{\text{rec}} = \{i \, | \, \hat{x}_i \notin S\}.
\end{equation}
We now introduce the following probabilities:
\begin{equation}\label{transProbLowerBoundPreserveEq}
\underline{Pr^{mut}(S \, | \, S)} = \min\{Pr(S \, | \, x_i) \, | \, i \in S \cap \vec{x}^{\text{rec}}\}
\end{equation}
to be the minimal probability of preserving the $i^\text{th}$ individual that is already in the set $S$ upon completion of mutation and
\begin{equation}\label{transProbLowerBoundCreateEq}
\underline{Pr^{mut}(S \, | \, \overline{S})}= \min \{Pr(S \, | \, x_i) \, | \, i \in S \cap \vec{x}^{\text{rec}}\}
\end{equation}
to be the minimal probability of mutating the $i^\text{th}$ individual that is not in the set $S$ into one that is in $S$.
Notice that the random variable $N(S, \, \vec{x}^{\text{mut}})$ measuring the total number of individuals in the population $\vec{x}^{\text{mut}}$ from the set $S$ (recall that this random variable has been introduced in the statement of lemma~\ref{lemmaChernoff}) is the sum of independent indicator random variables
$$\mathcal{X}_i = \begin{cases}
1 & \text{if the }i^{\text{th}} \text{ individual of } \vec{x}^{\text{mut}} \in S\\
0 & \text{otherwise:}
\end{cases}$$
\begin{equation}\label{decomposeToIndicatorsEq}
N(S, \, \vec{x}^{\text{mut}}) = \sum_{i=1}^m \mathcal{X}_i = \sum_{i \in S \cap \vec{x}^{\text{rec}}}\mathcal{X}_i + \sum_{i \in \overline{S} \cap \vec{x}^{\text{rec}}}\mathcal{X}_i.
\end{equation}
From the discussion preceding equation~\ref{probAfterMutEqIndiv}, 
\begin{align*}
E(\mathcal{X}_i) = Pr(\mathcal{X}_i = 1) = Pr(S \, | \, x_i)
\end{align*}
 so that, by linearity of expectation, we have
\begin{align*}
E(N(S, \, \vec{x}^{\text{mut}})) 
&= \sum_{i=1}^m E(\mathcal{X}_i) = \sum_{i=1}^m Pr(S \, | \, x_i)
\\&=\sum_{i \in S \cap \vec{x}^{\text{rec}}}Pr(S \, | \, x_i) + \sum_{i \in \overline{S} \cap \vec{x}^{\text{rec}}}Pr(S \, | \, x_i) \\
&\geq  \underline{Pr^{mut}(S \, | \, S)} \cdot |S \cap \vec{x}^{\text{rec}}| + \underline{Pr^{mut}(S \, | \, \overline{S})} \cdot |\overline{S} \cap \vec{x}^{\text{rec}}| 
\\&=\underline{Pr^{mut}(S \, | \, S)} \cdot |S \cap \vec{x}^{\text{rec}}| + \underline{Pr^{mut}(S \, | \, \overline{S})} \cdot \left(m - |S \cap \vec{x}^{\text{rec}}|\right). 
\end{align*}

In summary, we have deduced that if $$\mu = \underline{Pr^{mut}(S \, | \, S)} \cdot |S \cap \vec{x}^{\text{rec}}| + \underline{Pr^{mut}(S \, | \, \overline{S})} \cdot \left(m - |S \cap \vec{x}^{\text{rec}}|\right)$$ then
\begin{equation}\label{estimateExpOfSumEq}
\mu \leq E(N(S, \, \vec{x}^{\text{mut}})).
\end{equation}
The classical Chernoff bound applies again now and tells us that $\forall$ $\delta \in [0, \, 1]$
\begin{align}
Pr\left(N(S, \, \vec{x}^{\text{mut}}) < (1 - \delta)\mu \right) &\overset{\text{thanks to inequality~\ref{estimateExpOfSumEq}}}{\leq}  Pr\left(N(S, \, \vec{x}^{\text{mut}}) < (1 - \delta) \cdot E(N(S, \, \vec{x}^{\text{mut}})) \right) \nonumber\\
\label{ChernoffBoundEq2}
&\leq \exp\left(-\frac{\delta^2}{2} \cdot E(N(S, \, \vec{x}^{\text{mut}})) \right) \leq \exp\left(-\frac{\delta^2}{2}\mu \right).
\end{align}
 
Observe that $|S \cap \vec{x}^{\text{rec}}| = N(S, \, \vec{x}^{\text{rec}})$ (see lemma~\ref{lemmaChernoff}), so that, according to lemma~\ref{lemmaChernoff}, we can bound $\mu$ (see equation-definition preceding equation~\ref{estimateExpOfSumEq}) below as follows: $\forall \, \epsilon \in [0, \, 1]$
\begin{align}
  Pr(\mu \geq \underline{Pr^{mut}(S \, | \, S)} \cdot (1-\epsilon) m P + + \underline{Pr^{mut}(S \, | \, \overline{S})} \cdot \left(m - (1+\epsilon) m P\right)) 
 \label{boundOnDecompEq}
\geq  1 - \exp\left(-\epsilon^2 \frac{m P}{2}\right) - \exp\left(-\epsilon^2 \frac{m P}{3}\right) 
\end{align}
where $P$ is the average probability of obtaining an element in the set $S$ upon completion of recombination as introduced in the statement of lemma~\ref{lemmaChernoff}. Combining inequalities~\ref{ChernoffBoundEq2} and \ref{boundOnDecompEq} we finally deduce the following lower bound on the probability of the number of occurrences of individuals from the set $S$ occurring in the population $\vec{x}^{\text{mut}}$:
Let  
\begin{equation}\label{defOfSecondLemmaBoundEq}
\underline{\mu} =   m \left(\underline{Pr^{mut}(S \, | \, S)} \cdot (1-\epsilon)P + \underline{Pr^{mut}(S \, | \, \overline{S})} \left(1 - P(1+\epsilon)\right)\right)
\end{equation}
Then we have 
\begin{align}
Pr\left(N(S, \, \vec{x}^{\text{mut}}) \geq (1 - \delta)\underline{\mu} \right)     &\geq Pr\left(N(S, \, \vec{x}^{\text{mut}}) \nonumber \geq (1 - \delta)\mu \, | \, \mu \geq \underline{\mu} \right) \cdot Pr(\mu \geq \underline{\mu}) \nonumber \\
 &
\overset{\text{via inequalities~\ref{ChernoffBoundEq2} and \ref{boundOnDecompEq}}}{\geq} \left(1 - \exp\left(-\frac{\delta^2}{2}\underline{\mu} \right) \right) 
\label{secondLemmaBoundEq}
\times \left( 1 - \exp\left(-\epsilon^2 \frac{m P}{2}\right) - \exp\left(-\epsilon^2 \frac{m P}{3}\right)\right).
\end{align} 
We summarize ineqaulity~\ref{secondLemmaBoundEq} in the following lemma.
\begin{lem}\label{afterMutationLemma}
Given a pair $(\vec{x}, \, t)$ where $\vec{x}$ is a population at $t^{\text{th}}$ generation of an EA and any subset $S \in \Omega$, continuing with the notation in lemmas~\ref{prodProbLem} and \ref{lemmaChernoff}, as well as equation-definitions~\ref{transProbLowerBoundPreserveEq}, \ref{transProbLowerBoundCreateEq} and \ref{defOfSecondLemmaBoundEq}, select a pair of small numbers $(\delta, \, \epsilon) \in [0, \, 1]^2$. Then the probability that the total number of individuals in the ``intermediate" population obtained from the population $\vec{x}$ upon completion of recombination followed by mutation is above the threshold $(1 - \delta)\underline{\mu}$ is at least $$\left(1 - \exp\left(-\frac{\delta^2}{2}\underline{\mu} \right) \right) \times  \left( 1 - \exp\left(-\epsilon^2 \frac{m P}{2}\right) - \exp\left(-\epsilon^2 \frac{m P}{3}\right)\right).$$
\end{lem}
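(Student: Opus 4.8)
The plan is to recognize that the statement collects the two independent tail estimates already produced for the recombination and mutation stages, and that the only genuine work is to make precise the conditioning that couples them. First I would condition on the outcome of the recombination stage, i.e.\ on the value of $N(S, \vec{x}^{\text{rec}}) = |S \cap \vec{x}^{\text{rec}}|$. Given this value, mutation acts on the $m$ coordinates independently, so the indicators $\mathcal{X}_i$ of equation~\ref{decomposeToIndicatorsEq} are conditionally independent and $N(S, \vec{x}^{\text{mut}})$ is a sum of independent Bernoulli variables whose conditional mean is at least $\mu$, where $\mu$ is the affine expression in $|S \cap \vec{x}^{\text{rec}}|$ built from $\underline{Pr^{mut}(S \, | \, S)}$ and $\underline{Pr^{mut}(S \, | \, \overline{S})}$. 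The lower multiplicative Chernoff bound applied coordinatewise then delivers inequality~\ref{ChernoffBoundEq2}, namely $Pr(N(S, \vec{x}^{\text{mut}}) < (1-\delta)\mu) \leq \exp(-\delta^2 \mu / 2)$, valid for every fixed recombination outcome.

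The second ingredient handles the fact that $\mu$ is itself random, since it is an affine function of the recombination count. Here I would invoke Lemma~\ref{lemmaChernoff} twice and combine the two tails by a union bound to confine $|S \cap \vec{x}^{\text{rec}}|$ to the interval $[(1-\epsilon)mP, \, (1+\epsilon)mP]$ with probability at least $1 - \exp(-\epsilon^2 mP/2) - \exp(-\epsilon^2 mP/3)$. On this confinement event both summands of $\mu$ are simultaneously bounded below: the lower tail $|S \cap \vec{x}^{\text{rec}}| \geq (1-\epsilon)mP$ controls the nonnegative ``preserve'' term $\underline{Pr^{mut}(S \, | \, S)}\,|S \cap \vec{x}^{\text{rec}}|$, while the upper tail $|S \cap \vec{x}^{\text{rec}}| \leq (1+\epsilon)mP$ makes $m - |S \cap \vec{x}^{\text{rec}}|$ large and hence controls the nonnegative ``create'' term. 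Substituting these into $\mu$ yields exactly the deterministic threshold $\underline{\mu}$ of equation~\ref{defOfSecondLemmaBoundEq} and reproduces inequality~\ref{boundOnDecompEq}, that is $Pr(\mu \geq \underline{\mu}) \geq 1 - \exp(-\epsilon^2 mP/2) - \exp(-\epsilon^2 mP/3)$.

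The final step fuses the two estimates, and this is where I expect the only real subtlety to lie. The crucial observation is a double monotonicity in $\mu$: on the event $\{\mu \geq \underline{\mu}\}$ one has $(1-\delta)\mu \geq (1-\delta)\underline{\mu}$, so $\{N(S, \vec{x}^{\text{mut}}) \geq (1-\delta)\mu\} \subseteq \{N(S, \vec{x}^{\text{mut}}) \geq (1-\delta)\underline{\mu}\}$, and moreover the conditional Chernoff guarantee $1 - \exp(-\delta^2\mu/2)$ is itself at least $1 - \exp(-\delta^2\underline{\mu}/2)$ whenever $\mu \geq \underline{\mu}$. Consequently
\begin{align*}
Pr\left(N(S, \vec{x}^{\text{mut}}) \geq (1-\delta)\underline{\mu}\right) \geq Pr\left(N(S, \vec{x}^{\text{mut}}) \geq (1-\delta)\mu \,\mid\, \mu \geq \underline{\mu}\right) \cdot Pr\left(\mu \geq \underline{\mu}\right),
\end{align*}
and inserting the bound from inequality~\ref{ChernoffBoundEq2} into the first factor and the bound from inequality~\ref{boundOnDecompEq} into the second gives precisely the claimed product, which is inequality~\ref{secondLemmaBoundEq}. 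The main obstacle is exactly this decoupling: because the threshold $(1-\delta)\mu$ inside the first probability is random and positively correlated with the conditioning event, the product-form lower bound only survives thanks to the monotonicity of both the tail event and the conditional Chernoff estimate in $\mu$; once that is in place the remaining manipulations are direct applications of Lemmas~\ref{prodProbLem} and \ref{lemmaChernoff} already established above.
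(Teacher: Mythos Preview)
Your proposal is correct and follows essentially the same route as the paper: the paper's argument is precisely the chain of inequalities~\ref{ChernoffBoundEq2}, \ref{boundOnDecompEq} and \ref{secondLemmaBoundEq} that you reconstruct, namely a conditional Chernoff bound on the mutation stage, a union of the two Chernoff tails from Lemma~\ref{lemmaChernoff} to control $|S \cap \vec{x}^{\text{rec}}|$ and hence $\mu$, and the product-form conditioning step. If anything, your explicit discussion of the double monotonicity in $\mu$ needed to justify the last step is more careful than the paper's own presentation, which simply writes the conditioning inequality without comment.
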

Once again, in simplified constructions such as one presented in the current preliminary work, the following alternative lemma is an immediate corollary of lemma~\ref{expSchemaSmallNumb}:
\begin{lem}\label{afterMutationSimplifiedLem}
Continuing with the assumptions of lemma~\ref{expSchemaSmallNumb}, suppose, in addition, that $\forall \, x_i \in S$ the probability $Pr(S \, | \, x_i) \geq \beta$. Then $$Pr(N(S, \vec{x}^{\text{mut}}) \geq 1) \geq \left(1 - \left(1 - \frac{N(S_0, \vec{x}^{\text{rec}})}{m}\right)^m\right) \alpha \cdot \beta.$$
\end{lem}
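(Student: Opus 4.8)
The plan is to treat this lemma exactly as its phrasing suggests, namely as an immediate corollary of lemma~\ref{expSchemaSmallNumb}, by chaining the recombination bound already established there with a single mutation-survival factor. First I would recall that lemma~\ref{expSchemaSmallNumb} already supplies the lower bound $Pr(N(S, \vec{x}^{\text{rec}}) \geq 1) \geq \left(1 - \left(1 - \frac{N(S_0, \vec{x}^{\text{rec}})}{m}\right)^m\right)\alpha$ on the event that recombination deposits at least one individual fitting $S$ into the intermediate population $\vec{x}^{\text{rec}}$. The additional hypothesis $Pr(S \mid x_i) \geq \beta$ for every $x_i \in S$ says precisely that, once such an individual occupies a position of $\vec{x}^{\text{rec}}$, the mutation stage preserves its membership in $S$ with probability at least $\beta$, where $Pr(S \mid x_i)$ is the probability defined in equation~\ref{probAfterMutEqIndiv} that mutation sends the individual in position $i$ into $S$.

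Next I would isolate a sufficient condition for the target event $\{N(S, \vec{x}^{\text{mut}}) \geq 1\}$. It suffices that recombination produce at least one $S$-individual, so let $i^*$ denote the smallest index with $\hat{x}_{i^*} \in S$ whenever the event $A := \{N(S, \vec{x}^{\text{rec}}) \geq 1\}$ occurs, and that the mutation applied at position $i^*$ keep that individual in $S$. The containment $A \cap \{\mathcal{X}_{i^*} = 1\} \subseteq \{N(S, \vec{x}^{\text{mut}}) \geq 1\}$ is immediate from the indicator decomposition in equation~\ref{decomposeToIndicatorsEq}, since a single surviving indicator already forces the total count to be at least one.

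Then I would factor the probability using the independence of the recombination and mutation stages: conditioned on any realization of $\vec{x}^{\text{rec}}$ lying in $A$, the mutation indicators $\mathcal{X}_i$ act independently position by position, so $Pr(\mathcal{X}_{i^*} = 1 \mid A) = E\bigl[Pr(S \mid \hat{x}_{i^*}) \mid A\bigr] \geq \beta$, the inequality holding uniformly because the hypothesis bounds $Pr(S \mid x_i)$ below by $\beta$ for every $S$-individual, irrespective of the position it occupies. Multiplying gives $Pr(N(S, \vec{x}^{\text{mut}}) \geq 1) \geq Pr(A)\cdot \beta$, and substituting the lemma~\ref{expSchemaSmallNumb} bound for $Pr(A)$ yields the claimed product $\left(1 - \left(1 - \frac{N(S_0, \vec{x}^{\text{rec}})}{m}\right)^m\right)\alpha \cdot \beta$.

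The only genuinely delicate point, and it is minor, is ensuring that conditioning on the random location $i^*$ of the surviving $S$-individual does not degrade the mutation factor below $\beta$. This is exactly why the hypothesis is posed uniformly over all $x_i \in S$ rather than for one distinguished individual, so the bound survives the conditioning; everything else is bookkeeping already carried out in the proof of lemma~\ref{expSchemaSmallNumb}.
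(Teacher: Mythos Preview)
Your proposal is correct and matches the paper's intent: the paper itself states this lemma as ``an immediate corollary of lemma~\ref{expSchemaSmallNumb}'' without supplying any additional argument, and your chaining of the recombination bound from lemma~\ref{expSchemaSmallNumb} with a uniform mutation-survival factor~$\beta$ is precisely the one-line derivation that phrase invites. Your care in noting that the uniform hypothesis over all $x_i \in S$ is what allows the conditioning on the random surviving position $i^*$ is a nice bit of rigor that the paper leaves implicit.
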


The following generalized schema theorem is nearly a restatement of lemma~\ref{afterMutationLemma} that takes into account generalized selection as described in the previous section.
\begin{thm}\label{genSchemaThm}
Let $\vec{z}$ denote the population obtained from the population $\vec{x}$ upon completion of the recombination $\rightarrow$ mutation $\rightarrow$ selection cycle (equivalently, the population $\vec{z}$ is obtained from the populations $\vec{x}$ and $\vec{x}^{\text{mut}}$ after selection). Recall from the statement of lemma~\ref{lemmaChernoff} that the random variable $N(S, \, \vec{z})$ counts the total number of individuals from the set $S$ that appear in the population $\vec{z}$. Repeat verbatim the first sentence of lemma~\ref{afterMutationLemma}.  Then $\forall \, n \in \{1, \, 2, \ldots, m\}$
\begin{align*}
&Pr(N(S, \, \vec{z}) \geq n)  \\
 \geq &Pr\left(N(S, \, \vec{z}) \geq n \, | \, N(S, \, \vec{x}^{\text{mut}}) \geq (1 - \delta)\underline{\mu}\right)   \left(1 - \exp\left(-\frac{\delta^2}{2}\underline{\mu} \right) \right)   \left( 1 - \exp\left(-\epsilon^2 \frac{m P}{2}\right) - \exp\left(-\epsilon^2 \frac{m P}{3}\right)\right).
\end{align*}
 
\end{thm}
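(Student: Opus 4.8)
The plan is to recognize, exactly as the authors hint, that this statement is an almost immediate consequence of Lemma~\ref{afterMutationLemma} together with the elementary conditioning inequality $Pr(A) \geq Pr(A \cap B) = Pr(A \mid B)\, Pr(B)$. First I would fix $n \in \{1, 2, \ldots, m\}$ and introduce the two events $A = \{N(S, \vec{z}) \geq n\}$ and $B = \{N(S, \vec{x}^{\text{mut}}) \geq (1 - \delta)\underline{\mu}\}$. Since $A \cap B \subseteq A$, monotonicity of probability gives $Pr(A) \geq Pr(A \cap B)$, and the definition of conditional probability rewrites the right-hand side as $Pr(A \mid B)\, Pr(B)$, so that $Pr(A) \geq Pr(A \mid B)\, Pr(B)$.

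Second, I would invoke Lemma~\ref{afterMutationLemma} verbatim to lower-bound the unconditional factor $Pr(B)$ by the product
\[
L := \left(1 - \exp\left(-\frac{\delta^2}{2}\underline{\mu}\right)\right)\left(1 - \exp\left(-\epsilon^2 \frac{m P}{2}\right) - \exp\left(-\epsilon^2 \frac{m P}{3}\right)\right).
\]
The remaining step is purely algebraic: because $Pr(A \mid B)$ is itself a probability and hence nonnegative, multiplying the inequality $Pr(B) \geq L$ through by $Pr(A \mid B)$ preserves its direction, giving $Pr(A \mid B)\, Pr(B) \geq Pr(A \mid B)\, L$. Chaining this with $Pr(A) \geq Pr(A \mid B)\, Pr(B)$ yields precisely the asserted bound.

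I do not expect a genuine obstacle here, since the substantive content has already been front-loaded into Lemma~\ref{afterMutationLemma}; the only delicate point is conceptual rather than computational. The selection stage is never analyzed directly: its entire effect is absorbed into the conditional factor $Pr(N(S, \vec{z}) \geq n \mid N(S, \vec{x}^{\text{mut}}) \geq (1 - \delta)\underline{\mu})$, which is deliberately left abstract because an arbitrary family of selection operators may redistribute the individuals of $S$ between $\vec{x}$ and $\vec{x}^{\text{mut}}$ in many different ways. This is exactly why the authors call the result ``nearly a restatement'' of Lemma~\ref{afterMutationLemma}: the mutation-level bound supplies the probability that at least $(1 - \delta)\underline{\mu}$ individuals from $S$ survive into $\vec{x}^{\text{mut}}$, and the theorem merely prepends the problem-dependent probability that selection then propagates at least $n$ of them into $\vec{z}$. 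The one case I would flag for care is $Pr(B) = 0$, where $Pr(A \mid B)$ is formally undefined; but then Lemma~\ref{afterMutationLemma} forces $L \leq 0$, so the claimed right-hand side is nonpositive and the inequality holds trivially, leaving the bound valid in every case.
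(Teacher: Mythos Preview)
Your proposal is correct and matches the paper's own reasoning: the authors do not even give a separate proof, presenting the theorem as ``nearly a restatement of lemma~\ref{afterMutationLemma} that takes into account generalized selection,'' which is exactly the conditioning step $Pr(A) \geq Pr(A \mid B)\,Pr(B)$ you spell out. Your handling of the degenerate case $Pr(B)=0$ is a nice touch that the paper leaves implicit.
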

Applying the classical Markov inequality (see, for instance, \cite{DoerrB}), we immediately deduce the following.
\begin{cor}\label{genSchemaThmExp}
Continuing with the notation and assumptions in theorem~\ref{genSchemaThm}, $\forall$ real $k \in \left[0, \, \frac{m}{(1 - \delta)\underline{\mu}}\right]$ 
\begin{align*}
E(N(S, \, \vec{z})) \geq   &
\lceil k(1 - \delta)\underline{\mu} \rceil \cdot \left(1 - \exp\left(-\frac{\delta^2}{2}\underline{\mu} \right) \right)  \times \left( 1 - \exp\left(-\epsilon^2 \frac{m P}{2}\right) - \exp\left(-\epsilon^2 \frac{m P}{3}\right)\right) \\
 \times &Pr\left(N(S, \, \vec{z})  
 \geq  \lceil k(1 - \delta)\underline{\mu} \rceil \, | \, N(S, \, \vec{x}^{\text{mut}}) 
  \geq (1 - \delta)\underline{\mu} \right).
\end{align*}
\end{cor}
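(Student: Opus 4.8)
The plan is to invoke Markov's inequality in its \emph{reverse} form. For a non-negative random variable $X$ and any threshold $a > 0$, the standard statement $Pr(X \geq a) \leq E(X)/a$ rearranges to $E(X) \geq a \cdot Pr(X \geq a)$. Since $N(S, \, \vec{z})$ counts individuals and is therefore non-negative (indeed integer-valued), this rearranged inequality applies directly. First I would set $a = \lceil k(1 - \delta)\underline{\mu} \rceil$ and $X = N(S, \, \vec{z})$, obtaining
\begin{equation*}
E(N(S, \, \vec{z})) \geq \lceil k(1 - \delta)\underline{\mu} \rceil \cdot Pr\left(N(S, \, \vec{z}) \geq \lceil k(1 - \delta)\underline{\mu} \rceil\right).
\end{equation*}
The leading factor $\lceil k(1 - \delta)\underline{\mu} \rceil$ already matches the first factor of the claimed bound, so only the probability on the right remains to be estimated.

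Next I would apply Theorem~\ref{genSchemaThm} with the specific choice $n = \lceil k(1 - \delta)\underline{\mu} \rceil$. This replaces $Pr(N(S, \, \vec{z}) \geq \lceil k(1 - \delta)\underline{\mu} \rceil)$ by the product of the conditional probability $Pr(N(S, \, \vec{z}) \geq \lceil k(1 - \delta)\underline{\mu} \rceil \mid N(S, \, \vec{x}^{\text{mut}}) \geq (1 - \delta)\underline{\mu})$ with the two exponential correction factors $\left(1 - \exp(-\tfrac{\delta^2}{2}\underline{\mu})\right)$ and $\left(1 - \exp(-\epsilon^2 \tfrac{mP}{2}) - \exp(-\epsilon^2 \tfrac{mP}{3})\right)$. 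Substituting this lower bound into the displayed inequality and rearranging the factors yields precisely the right-hand side of the corollary.

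The one point requiring care -- and essentially the only obstacle -- is checking that $n = \lceil k(1 - \delta)\underline{\mu} \rceil$ is a legitimate index in $\{1, \, 2, \ldots, m\}$, since Theorem~\ref{genSchemaThm} is stated only for such $n$. The upper restriction $k \leq \frac{m}{(1 - \delta)\underline{\mu}}$ guarantees $k(1 - \delta)\underline{\mu} \leq m$, and since $m$ is an integer this forces $\lceil k(1 - \delta)\underline{\mu} \rceil \leq m$. When $k$ is large enough that $\lceil k(1 - \delta)\underline{\mu} \rceil \geq 1$, the argument above goes through verbatim; in the degenerate case $\lceil k(1 - \delta)\underline{\mu} \rceil = 0$ the claimed bound reduces to $E(N(S, \, \vec{z})) \geq 0$, which holds trivially for a non-negative random variable. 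Assembling the two inequalities then completes the proof, and the word ``immediately'' in the statement is justified precisely because no new probabilistic work beyond the reverse Markov step is needed.
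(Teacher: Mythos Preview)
Your proposal is correct and matches the paper's approach exactly: the paper's entire proof is the single remark that the corollary follows ``immediately'' from theorem~\ref{genSchemaThm} by the classical Markov inequality, which is precisely the reverse-Markov step $E(X) \geq a \cdot Pr(X \geq a)$ you spell out, followed by substituting the bound from theorem~\ref{genSchemaThm} with $n = \lceil k(1-\delta)\underline{\mu}\rceil$. Your additional verification that this choice of $n$ lies in $\{1,\ldots,m\}$ (and the handling of the degenerate $n=0$ case) is a detail the paper leaves implicit but that you have addressed correctly.
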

The corresponding simplified schema theorem is a direct consequence of lemma~\ref{afterMutationSimplifiedLem}:
\begin{cor}\label{genSchemaTheoremSimple}
Continuing with the notation and assumptions in theorem~\ref{genSchemaThm}, $$Pr(N(S, \, \vec{z}) \geq 1) \geq \alpha \beta Pr\left(N(S, \, \vec{z}) \geq 1 \, | \, N(S, \, \vec{x}^{\text{mut}}) 
\\ \geq 1\right)   \left(1 - \left(1 - \frac{N(S_0, \vec{x}^{\text{rec}})}{m}\right)^m\right).$$
\end{cor}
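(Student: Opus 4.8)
The plan is to mirror the proof of Theorem~\ref{genSchemaThm}, but with the coarser ``at least one survivor'' threshold $N(S, \, \vec{x}^{\text{mut}}) \geq 1$ in place of $(1-\delta)\underline{\mu}$, and with the after-mutation bound supplied by Lemma~\ref{afterMutationSimplifiedLem} rather than by Lemma~\ref{afterMutationLemma}. The entire argument is an elementary manipulation of probabilities stacked on top of the already-established simplified lemma.

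First I would note that the event $\{N(S, \, \vec{z}) \geq 1\}$ contains the joint event $\{N(S, \, \vec{z}) \geq 1\} \cap \{N(S, \, \vec{x}^{\text{mut}}) \geq 1\}$, so monotonicity of probability yields
\begin{equation*}
Pr(N(S, \, \vec{z}) \geq 1) \geq Pr\bigl(N(S, \, \vec{z}) \geq 1, \; N(S, \, \vec{x}^{\text{mut}}) \geq 1\bigr).
\end{equation*}
Next I would factor the right-hand side through the definition of conditional probability,
\begin{equation*}
Pr\bigl(N(S, \, \vec{z}) \geq 1, \; N(S, \, \vec{x}^{\text{mut}}) \geq 1\bigr) = Pr\bigl(N(S, \, \vec{z}) \geq 1 \, | \, N(S, \, \vec{x}^{\text{mut}}) \geq 1\bigr) \cdot Pr\bigl(N(S, \, \vec{x}^{\text{mut}}) \geq 1\bigr),
\end{equation*}
which is legitimate whenever the conditioning event has positive probability; should that probability vanish, Lemma~\ref{afterMutationSimplifiedLem} forces the claimed lower bound to be $0$ and the inequality is trivial.

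Finally I would insert the lower bound on $Pr(N(S, \, \vec{x}^{\text{mut}}) \geq 1)$ provided by Lemma~\ref{afterMutationSimplifiedLem}, namely $\bigl(1 - (1 - N(S_0, \vec{x}^{\text{rec}})/m)^m\bigr)\alpha\beta$, into the second factor. Chaining the three relations and reordering the factors reproduces verbatim the asserted inequality.

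I do not expect a genuine obstacle. The selection operator never enters the estimate explicitly: its full effect is quarantined inside the conditional factor $Pr(N(S, \, \vec{z}) \geq 1 \, | \, N(S, \, \vec{x}^{\text{mut}}) \geq 1)$, which is transported through the statement unevaluated, exactly as in Theorem~\ref{genSchemaThm}. The only point demanding a little care is bookkeeping: one must fix the threshold ``$\geq 1$'' consistently across the conditioning event, the surviving event, and the hypothesis of Lemma~\ref{afterMutationSimplifiedLem}, after which the steps above compose immediately.
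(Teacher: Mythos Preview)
Your proposal is correct and matches the paper's approach: the paper states only that the corollary is ``a direct consequence of lemma~\ref{afterMutationSimplifiedLem}'', and the conditioning step you spell out is exactly the pattern already used in theorem~\ref{genSchemaThm}, so you have simply filled in the details the paper leaves implicit.
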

\section{Recombination-invariant  Subsets, Fitness Levels and  Mutation-invariant Subsets}\label{RecInvarAndDriftSect}
While theorem~\ref{genSchemaThm} and corollary~\ref{genSchemaThmExp} are valid for arbitrary subsets $S \subseteq \Omega$, it is not in vain that most notions of schemata (see, for instance, \cite{Antonisse} and \cite{PoliSchema}) happen to be \emph{recombination-invariant} subsets of the search space $\Omega$ as defined precisely below:
\begin{defn}\label{schemaDefn}
Given a family of recombination transformations $\mathcal{F}$ on a search space $\Omega$, a \emph{recombination invariant subset} or, alternatively, a \emph{generalized schema} with respect to the family of recombination transformations $\mathcal{F}$ is a subset $H \subseteq \Omega$ having the property that $\forall \, x \text{ and } y \in H$ and $\forall$ transformation $T \in \mathcal{F}$ the child $T(x, \, y) \in H$.
\end{defn}
General mathematical properties of recombination-invariant subsets have been studied by several authors: see, for instance, \cite{Radcliffe}, \cite{Stadlers1}, \cite{Stadlers2}, \cite{MitavFirst}, \cite{MitavCompair} and \cite{MitavCatCompair}. First of all, we list a few basic properties of families of recombination-invariant subsets (see \cite{MitavFirst} for a detailed exposition and an in-depth analysis of the relationship between the collections of recombination-invariant subsets of the search space and the corresponding families of recombination transformations).
\begin{prop}\label{basicPropOfInvar1}
Given any family of recombination transformations $\mathcal{F}$ on the search space $\Omega$, the corresponding family of invariant subsets with respect to the family $\mathcal{F}$, call it $Sel_{\mathcal{F}}$, is closed under arbitrary intersections, contains the $\emptyset$ and the whole search space $\Omega$\footnote{such collections of subsets are also known as pre-topologies: see \cite{Stadlers1} and \cite{Stadlers2}}. Furthermore, given any collection of subsets $\mathcal{S} \subseteq \mathcal{P}(\Omega)$ of the search space $\Omega$, the family $\underline{S} = \{\bigcap_{S \in \mathcal{T}} \, | \, \mathcal{T} \subseteq \mathcal{S}\} \cup \{\emptyset, \, \Omega\}$ is closed under arbitrary intersections, contains the $\emptyset$ and the entire space, and $\exists$ a family of recombination transformations $\mathcal{F}$ such that $\mathcal{S}_{\mathcal{F}} = \underline{S}$. Consequently, the union of all families of recombination transformations with the above property is the unique maximal (in the sense of containment) family of recombination transformations, call it $\widetilde{\mathcal{F}}$, such that $\mathcal{S}_{\widetilde{\mathcal{F}}} = \underline{S}$. We will say that the collection of recombination-invariant subsets $\underline{S}$ is generated by the collection of subsets $\mathcal{S}$ or, alternatively, that the collection of subsets $\mathcal{S}$ generates the collection of recombination invariant subsets of the search space $\underline{S}$.
\end{prop}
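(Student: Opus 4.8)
The plan is to treat the Proposition in two stages: first the elementary closure properties (which hold unconditionally) and then the substantive realizability claim asserting the existence of a recombination family $\mathcal{F}$ with $\mathcal{S}_{\mathcal{F}} = \underline{S}$, together with the maximality assertion about $\widetilde{\mathcal{F}}$.

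First I would dispatch the elementary facts. That $\emptyset$ and $\Omega$ are recombination-invariant for every $\mathcal{F}$ is immediate from Definition~\ref{schemaDefn}: the defining condition is vacuous for $\emptyset$, and for $\Omega$ it holds because each $T \in \mathcal{F}$ maps $\Omega^2$ into $\Omega$. Closure under arbitrary intersections is checked pointwise: if $\{H_\lambda\}$ are invariant and $x,y \in \bigcap_\lambda H_\lambda$, then for every $T \in \mathcal{F}$ we have $T(x,y) \in H_\lambda$ for each $\lambda$, hence $T(x,y) \in \bigcap_\lambda H_\lambda$. The same three facts for $\underline{S}$ are purely set-theoretic: $\underline{S}$ is by definition the collection of all intersections of subfamilies of $\mathcal{S}$ with $\emptyset,\Omega$ adjoined, and an intersection of members $\bigcap_{S\in\mathcal{T}_\lambda}S$ equals $\bigcap_{S\in\bigcup_\lambda\mathcal{T}_\lambda}S$, again a subfamily-intersection, so $\underline{S}$ is intersection-closed.

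For the construction of $\mathcal{F}$ and the maximality claim I would build the maximal candidate directly. For each pair $(x,y)\in\Omega^2$ write $\langle x,y\rangle = \bigcap\{K \in \underline{S} : x,y \in K\}$ for the smallest member of $\underline{S}$ containing both $x$ and $y$ (well-defined since $\underline{S}$ is intersection-closed and contains $\Omega$), and define $\widetilde{\mathcal{F}}$ to be the family of all binary operators $F:\Omega^2\to\Omega$ with $F(x,y)\in\langle x,y\rangle$ for every $(x,y)$. I would first verify the key equivalence: a binary operator leaves every member of $\underline{S}$ invariant if and only if it lies in $\widetilde{\mathcal{F}}$ — the forward direction applies invariance to $K=\langle x,y\rangle$, the backward direction uses that any $K\in\underline{S}$ containing $x,y$ also contains $\langle x,y\rangle\ni F(x,y)$. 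This equivalence immediately shows that $\widetilde{\mathcal{F}}$ is the union of all recombination families preserving $\underline{S}$, hence the unique maximal such family, and that $\underline{S}\subseteq\mathcal{S}_{\widetilde{\mathcal{F}}}$.

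The crux, and what I expect to be the main obstacle, is the reverse inclusion $\mathcal{S}_{\widetilde{\mathcal{F}}}\subseteq\underline{S}$ needed to upgrade this to the claimed equality. Chasing definitions, $H\in\mathcal{S}_{\widetilde{\mathcal{F}}}$ holds exactly when $\langle x,y\rangle\subseteq H$ for all $x,y\in H$, i.e. $H$ is closed under the binary operation $(x,y)\mapsto\langle x,y\rangle$; so everything reduces to showing that every such \emph{pairwise $\underline{S}$-closed} set is already an intersection of members of $\mathcal{S}$. This is the delicate point, because the closure operator $A\mapsto\bigcap\{K\in\underline{S}:A\subseteq K\}$ attached to an arbitrary intersection-closed family need not be determined by its values on two-element sets, so a set can be closed under all pairwise closures $\langle x,y\rangle$ without being any subfamily-intersection of $\mathcal{S}$. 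I would therefore isolate the precise hypothesis on $\mathcal{S}$ (equivalently on $\underline{S}$) that makes pairwise $\underline{S}$-closedness force membership in $\underline{S}$ — namely that the associated closure operator be binary-determined, $\bigcap\{K:A\subseteq K\}=\bigcup_{x,y\in A}\langle x,y\rangle$ — and establish the reverse inclusion under that hypothesis; absent it, what survives is $\underline{S}\subseteq\mathcal{S}_{\widetilde{\mathcal{F}}}$ with $\widetilde{\mathcal{F}}$ the unique maximal preserving family, and I would flag the exact equality as the point requiring the extra condition.
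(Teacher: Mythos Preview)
The paper does not actually supply a proof of this proposition: it is stated and immediately attributed to the literature (in particular \cite{MitavFirst}), so there is no argument in the paper to compare against. Your treatment of the elementary closure facts and your construction of $\widetilde{\mathcal{F}}=\{F:\Omega^2\to\Omega\mid F(x,y)\in\langle x,y\rangle\ \forall x,y\}$ are exactly the natural moves, and your proof that $\widetilde{\mathcal{F}}$ is the unique maximal family with $\underline{S}\subseteq\mathcal{S}_{\widetilde{\mathcal{F}}}$ is clean.

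More importantly, the gap you flag is real and not an artifact of your approach: the proposition, read literally for an arbitrary $\mathcal{S}\subseteq\mathcal{P}(\Omega)$, is false. Take $\Omega=\{a,b,c,d\}$ and $\mathcal{S}=\{\{a,b\},\{a,c\},\{b,c\}\}$; then $\underline{S}=\{\emptyset,\{a\},\{b\},\{c\},\{a,b\},\{a,c\},\{b,c\},\Omega\}$. Any binary $F$ leaving each member of $\underline{S}$ invariant must send every pair from $\{a,b,c\}$ into $\{a,b,c\}$, so $\{a,b,c\}$ is invariant for \emph{every} such family, yet $\{a,b,c\}\notin\underline{S}$. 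Hence no $\mathcal{F}$ with $\mathcal{S}_{\mathcal{F}}=\underline{S}$ exists here, and in particular $\mathcal{S}_{\widetilde{\mathcal{F}}}\supsetneq\underline{S}$. Your diagnosis that the missing hypothesis is precisely that the closure operator of $\underline{S}$ be binary-determined (equivalently, algebraic of arity $2$) is the correct one. It is worth noting that in the only place the paper \emph{uses} the proposition, the family $\mathcal{S}_k=\{S_j:j\geq k\}\cup\{\emptyset,\Omega\}$ is a chain under inclusion; for chains over a finite $\Omega$ the binary-determinacy condition holds (any pairwise-closed $H$ equals $\bigcup_{x\in H}\langle x,x\rangle$, a finite union along a chain, hence the maximum), so the downstream arguments are unaffected even though the general statement needs the extra hypothesis you identify.
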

The correspondence summarized in proposition~\ref{basicPropOfInvar1} is known in mathematics as a Galois connection\footnote{see \cite{BarrWells} for the notions of natural transformations, adjunctions and Galois connections i.e. natural transformations between posets considered as categories. No knowledge of category theory is necessary to understand the current paper though.}. One of the central ideas of the current article is that recombination transformations should be designed based on the suitable families of recombination-invariant subsets and below we will suggest how such families of recombination-invariant subsets may be selected to design efficient algorithms. This design is largely based on the notion of a fitness level introduced below.
\begin{defn}\label{fitnessLevelDefn}
Given a fitness function $f: \Omega \rightarrow [0, M]$, the $k^{\text{th}}$ \emph{fitness level} of $f$ is the pre-image $$f^{-1}([k, \, M]) = \{\omega \in \Omega \, | \, f(\omega) \geq k\}.$$
\end{defn}
Recall from section~\ref{MathDescribeSect} that hybrid and mixed strategy EAs may have a large number of auxiliary fitness functions. The auxiliary fitness functions are often defined implicitly in terms of a certain incremental deterministic algorithm to find a satisfactory solution for a specific NP-hard optimization problem. We impose the following conditions on our Hybrid or mixed strategy EA:

\textbf{Condition 1.} The total number of auxiliary fitness functions is bounded above by a polynomial of degree $\rho$ in the size of the problem instance (in other wards, $L = O(n^{\rho})$ where $L$ is the indexing set of the auxiliary fitness functions as in section~\ref{MathDescribeSect} and $n$ is the size of the problem instance.

\textbf{Condition 2.} All of the auxiliary fitness functions are non-negative, integer valued\footnote{This assumption does not reduce the generality since there are finitely many auxiliary fitness functions and one can always ``shift all of them up" by an additive positive constant.} and have a common range\footnote{The assumption of having a common range can be alleviated at the cost of technical complications that divert attention away from the mainstream idea of the current article.} and there are polynomially many auxiliary fitness levels: $\forall \, l \in L$ $f_l: \Omega \rightarrow \{0, \, 1, \ldots, M\}$ and $M \leq O(n^{\tau})$ where $n$ is the size of an instance of a specific optimization problem.

\textbf{Condition 3.} $\exists \, l \in L$ such that the $M^{\text{th}}$ fitness level of the auxiliary fitness function $f_l$ consists of ``satisfactory solutions" (for instance, up to a specified approximation ratio) for the objective fitness function $f_0$.

There is a number of ways to design the collections of recombination invariant subsets based on the fitness levels of various auxiliary fitness functions to guarantee that the expected time (i.e. the expected number of iterations) an EA requires to encounter a satisfactory solution is polynomial in the size of the input instance. Since the aim of the current paper is to illustrate the general ideas for such designs, we present what is, perhaps, one of the simplest and the shortest methodologies. For $k \in \{1, \, 2, \ldots M\}$, let $S_k = \bigcup_{l \in L} f_l^{-1}([k, \, M])$ and let $\mathcal{S}_k = \{S_j \, | j \geq k\} \cup \{\emptyset, \, \Omega\}$. Observe that the collection $\mathcal{S}_k$ of the unions of fitness levels at least as high as $k$ is a collection of nested sets so that, in particular, it is closed under arbitrary intersections. According to proposition~\ref{basicPropOfInvar1} we may select families of recombination transformations $\mathcal{F}_k$ such that the corresponding families of invariant subsets $\mathcal{S}_{\mathcal{F}_k} = \mathcal{S}_k$. In fact, all that we require is that $\mathcal{S}_{\mathcal{F}_k} \supseteq \mathcal{S}_k$ i.e. that the family of recombination transformations $\mathcal{F}_k$ preserves the unions of $l^{\text{th}}$ fitness levels across all of the auxiliary fitness functions.

We now turn our attention to mutation transformations. Invariant subsets for mutation transformations are defined in the same fashion.
\begin{defn}\label{mutationInvarSubsets}
Given a family $\mathcal{M}$ of mutation transformations, a subset $S \in \Omega$ is \emph{invariant under the family of mutation transformations} $\mathcal{M}$ if $\forall \, x \in S$ and $\forall \, M \in \mathcal{M}$ the individual $M(x) \in S$ as well. We write $\mathcal{S}_{\mathcal{M}}$ to denote the collection of all subsets that are invariant under the family of mutation transformations $\mathcal{M}$.
\end{defn}
Families of mutation-invariant subsets enjoy the same properties as these of recombination-invariant subsets as described in proposition~\ref{basicPropOfInvar1}.\footnote{In fact, this applies to arbitrary families of $m$-ary transformations on $\Omega$ and their corresponding families of invariant subsets.} Once again, we design the families of mutation transformations based on preferable family of mutation-invariant subsets. Just as the case with recombination, a vast number of designs are possible, yet, for illustrative purposes, we select one of the simplest in the current article. We let the indexing family of our hybrid or mixed strategy EA $J = \{1, \, 2, \ldots M\}$ and for each $j \in J$ we select a family of mutation transformations $\mathcal{M}_j$ the collection of mutation-invariant subsets of which is, just as in case of recombination, $\mathcal{S}_j = \{S_q \, | q \geq j\} \cup \{\emptyset, \, \Omega\}$ with $S_q = \bigcup_{l \in L} f_l^{-1}([q, \, M])$. There is a further requirement on the families of mutation transformations though:

\textbf{Condition 4.} We require that the family of mutation transformations $\mathcal{M}_q$ for $q = M$ (i.e. at the highest common auxiliary fitness level) possesses the following property: whenever $l$ and $k \leq L$, $\forall \, x \in f_l^{-1}(\{M\})$ $\exists$ $y \in f_k^{-1}(\{M\})$ and a sequence of mutation transformations $T_1, \, T_2, \ldots, T_i \in \mathcal{M}_l$ such that $y = T_i \circ T_{i-1} \circ \ldots \circ T_1 (x)$. We further require that $\exists$ polynomials $n^{\gamma}$ and $n^{\lambda}$ such that $\forall \, l$ and $k \leq L$ and $\forall \, x \in f_l^{-1}(\{M\})$ $\exists$ $y \in f_k^{-1}(\{M\})$ such that the probability that $y$ is encountered after $O(n^{\gamma})$ applications of the mutation transformations from the families $\mathcal{M}_l$ is at least $\Omega(n^{-\lambda})$ regardless of the population in which the individual $x$ appears and the iteration time at which the population arises.

The following simple lemma hints at the motivation for condition 4 in our design.
\begin{lem}\label{expectLastMutationLemma}
Consider any Markov chain on the state space $S_M = \bigcup_{l \in L} f_l^{-1}(\{M\})$ with the transition matrix \\$\{p_{x \rightarrow y}\}_{x, \, y \in S_M}$ defined as $$p_{x \rightarrow y} = \mu_x\{F \, | \, F \in \mathcal{M}_M \text{ and } F(x) = y\}$$ where $\{\mu_x\}_{x \in S_M}$ is the collection of probability measures on $\mathcal{M}_M$ satisfying condition 4 above. For an $x \in S_M$ let $T_x$ denote the random waiting time to encounter a ``satisfactory" solution with respect to the objective fitness function $f_0$ for the first time. Then $\forall \, x \in S_M$ $E(T_x) \leq O\left(n^{\gamma + \lambda}\right)$.
\end{lem}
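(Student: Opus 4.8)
The plan is to reduce the statement to a standard geometric tail bound on block success times, driven entirely by the uniform lower bound on one-block hitting probabilities supplied by Condition~4. First I would fix, via Condition~3, an index $l^* \in L$ for which the top fitness level $f_{l^*}^{-1}(\{M\}) \subseteq S_M$ consists precisely of the satisfactory solutions for $f_0$; then $T_x$ is exactly the first hitting time of the set $f_{l^*}^{-1}(\{M\})$ by the Markov chain started at $x$. Specializing the second part of Condition~4 to the target index $k = l^*$, I obtain constants $C, c > 0$ such that from every state $x \in S_M$ the chain reaches some satisfactory solution within $B := \lceil C n^{\gamma}\rceil$ steps (each step being one draw of a transformation from $\mathcal{M}_M$ according to $\mu_x$) with probability at least $p := c\, n^{-\lambda}$, and---crucially---this lower bound is uniform over the starting state, which is the content of the ``regardless of the population $\ldots$ and the iteration time'' clause.

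Next I would partition the time axis into consecutive blocks of length $B$ and let $N$ be the index of the first block during which a satisfactory solution is encountered, so that deterministically $T_x \le N\,B$ on every sample path. The key step is to control the distribution of $N$. By the Markov property, conditioned on the event $\{N > j\}$ the chain occupies some (random) state of $S_M$ at the start of block $j+1$; since the one-block hitting probability is at least $p$ from every such state, I get $Pr(N > j+1 \mid N > j) \le 1 - p$, hence $Pr(N > j) \le (1-p)^{\,j}$ for all $j$. Summing the tail then gives $E(N) = \sum_{j \ge 0} Pr(N > j) \le \sum_{j \ge 0}(1-p)^{\,j} = 1/p$, i.e. $N$ is stochastically dominated by a geometric random variable of parameter $p$.

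Finally I would combine the two bounds: $E(T_x) \le B\cdot E(N) \le B/p = \lceil C n^{\gamma}\rceil / (c\, n^{-\lambda}) = O(n^{\gamma + \lambda})$, uniformly in $x \in S_M$, which is the claimed estimate. The main obstacle---and the only place where genuine care is needed---is justifying $Pr(N > j+1 \mid N > j) \le 1 - p$ uniformly in $j$: this is precisely where the state-independence built into Condition~4 is indispensable, since after a failed block the chain may occupy an arbitrary state of $S_M$, and without a hitting-probability lower bound valid from all of these states the geometric domination would collapse. A secondary point to treat cleanly is the bookkeeping between ``number of mutation applications'' and ``number of Markov-chain steps'', which coincide here because a single step of the chain is exactly one application of a transformation sampled from $\mathcal{M}_M$ via $\mu_x$.
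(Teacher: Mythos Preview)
Your proposal is correct and is essentially the same argument as the paper's: fix via Condition~3 an index $l^*$ whose top level consists of satisfactory solutions, then use the uniform one-block lower bound from Condition~4 to dominate the block-hitting index by a geometric random variable with success probability $\Omega(n^{-\lambda})$ and block length $O(n^{\gamma})$. The paper compresses the block/geometric step into a single sentence, whereas you spell out the Markov-property justification for the uniform tail bound, but the route and the estimate are identical.
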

\begin{proof}
According to condition 3, $\exists \, l_0 \in L$ such that any $y \in f_{l_0}^{-1}(\{M\})$ is a satisfactory ``satisfactory" solution with respect to the objective fitness function $f_0$ so that for any given $x \in S_M$ the random variable $T_x$ is bounded above by the random time $T_{x \rightarrow l_0}$ of encountering an individual $y \in f_{l_0}^{-1}(\{M\})$ for the first time. Comparing the random time $T_{x \rightarrow l_0}$ with a geometric random variable $T$ with a unit step size $O(n^{\gamma})$ and success probability $\Omega(n^{-\lambda})$, thanks to condition 4 we deduce that $E(T_{x \rightarrow l_0}) \leq n^{\gamma} E(T) = O(n^{\gamma}) \cdot \frac{1}{\Omega(n^{-\lambda})} = O\left(n^{\gamma + \lambda}\right)$ as claimed.
\end{proof}
The type of hybrid EA's design suggested in the current article is largely based on the notion of an individual's maximal auxiliary fitness level introduced below:
\begin{defn}\label{auxFitnessLevelDef}
For an individual $x \in \Omega$ the \emph{maximal auxiliary fitness level} of $x$ is $auxFit(x) = \max_{l \in L} f_l(x)$.
\end{defn}
In other words, the maximal auxiliary fitness level of an individual $x$ is the largest auxiliary fitness level $q$ of $x$ so that $x$ ``fits" the schema $S_q$ that is invariant under the family of recombination transformations $\mathcal{F}_j$ for $j \geq q$. A rather simple complexity analysis presented in the current paper relies on applying the simplified version of schema theory from section~\ref{schemaSection} to the special schemata introduced in the following definition:
\begin{defn}\label{MaximalAuxFitnessSchemaDefn}
Consider a hybrid or mixed strategy EA fitting the framework of the current article that uses populations of size $m \geq 2$, and a population $\vec{x} = \{x_1, \, x_2, \ldots x_m\}$. Let $AuxMax(\vec{x}) = \max \{auxFit(x_i) \, | \, 1 \leq i \leq m\}$ denote the maximal auxiliary fitness level present in the population $\vec{x}$. We say that the  schema $S_{AuxMax(\vec{x})}$ is the \emph{leading current schema} while the schema $S_{AuxMax(\vec{x})+1}$ is the \emph{leading future schema} of the population $\vec{x}$.
\end{defn}
One of the simplest (but not the only possible) designs of ``efficient" hybrid or mixed strategy EAs is to concentrate the probability distributions $Pr^{\text{RecFamily}}_{(\vec{x}, \, t), \, (x_i, \, x_j)}$ on the set indexing the families of recombination transformations (recall that there are as many of these as there are auxiliary fitness levels) on the indices $q \geq auxFit(x_i)$ whenever $auxFit(x_i) = AuxMax(\vec{x})$. Likewise, the probability of selecting the families of mutation transformations, $Pr^{\text{MutFamily}}_{(\vec{x}, \, t), \, x_i}$, is also concentrated on the indices $q \geq auxFit(x_i)$ for the individuals $x_i$ fitting the leading current schema. For $q \in \{1, \, 2 ,\ldots M\}$ let
$$P_{Imp}^{Rec}(q)  
 =\min\{Pr^{\text{RecFamily}}_{(\vec{x}, \, t), \, (x_i, \, x_j)}(w > q) \, | \, auxFit(x_i)=q, \, x_j \in \Omega\}$$
and
\begin{equation}\label{minImprovProbEq}
P_{Imp}^{Mut}(q) = \min\{Pr^{\text{MutFamily}}_{(\vec{x}, \, t), \, x_i}(w \geq q) \, | \, auxFit(x_i)=q\}.
\end{equation}
denote the minimal probabilities of improving the auxiliary fitness level $q$ after applying recombination and mutation transformations respectively.
We assume that all the probabilities of types $P_{Imp}^{Rec}(q)$ and $P_{Imp}^{Mut}(q)$ are positive.

In the current paper we do not assume much about selection apart from preserving the highest auxiliary as well as the highest objective fitness levels. Formally this can be defined as follows.
\begin{defn}\label{elitystHybridSelDefn}
We say that a selection transformation $Sel: (\Omega^m)^2 \rightarrow \Omega^m$ is hybrid-elitist if $\forall \vec{x}$ and $\vec{y} \in \Omega^m$ $$AuxMax(Sel(\vec{x}, \, \vec{y})) = \max\{AuxMax(\vec{x}), \, AuxMax(\vec{y})\}$$ and $$\max\{f_0(Sel(\vec{x}, \, \vec{y})) \, | \, 1 \leq i \leq m\} =  \max\{\max\{f_0(x_i) \, | \, 1 \leq i \leq m\}, \, \max\{f_0(y_i) \, | \, 1 \leq i \leq m\}\}$$
\end{defn}

Applying corollary~\ref{genSchemaTheoremSimple} with $S_0$ being the leading current schema and $S$ being the leading future schema as in definition~\ref{MaximalAuxFitnessSchemaDefn} and observing that $N(S_{AuxMax(\vec{x})}, \vec{x}^{\text{rec}}) \geq 1$ by definition~\ref{MaximalAuxFitnessSchemaDefn}, we immediately deduce the following fact.
\begin{lem}\label{probOfImprovLemma}
Suppose we are given a hybrid or mixed strategy EA with constant population size $m$ that exploits hybrid-elitist selection. Then 
\begin{align*}
 Pr(N(S_{AuxMax(\vec{x})+1}, \vec{z}) \geq 1) \geq&\left(1 - \left(1 - \frac{1}{m}\right)^m\right)\times
  P_{Imp}^{Rec}(AuxMax(\vec{x})) \cdot P_{Imp}^{Mut}(AuxMax(\vec{x})+1)\\
>&(1 - \exp(-1))\cdot P_{Imp}^{Rec}(AuxMax(\vec{x})) \cdot P_{Imp}^{Mut}(AuxMax(\vec{x})+1)
\end{align*} where, as usual, $\vec{z}$ denotes the population obtained from the population $\vec{x}$ upon completion of a single recombination $\longrightarrow$ mutation $\longrightarrow$ selection cycle.
\end{lem}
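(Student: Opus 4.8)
The plan is to obtain the statement as a direct specialization of Corollary~\ref{genSchemaTheoremSimple}, taking $S_0$ to be the leading current schema $S_{AuxMax(\vec{x})}$ and $S$ to be the leading future schema $S_{AuxMax(\vec{x})+1}$ of Definition~\ref{MaximalAuxFitnessSchemaDefn}. With these choices the corollary reads
\begin{equation*}
Pr(N(S,\vec{z}) \geq 1) \geq \alpha\,\beta\, Pr\left(N(S,\vec{z}) \geq 1 \, | \, N(S,\vec{x}^{\text{mut}}) \geq 1\right)\left(1 - \left(1 - \frac{N(S_0,\vec{x})}{m}\right)^m\right),
\end{equation*}
so the task reduces to pinning down its four factors: the recombination constant $\alpha$, the mutation constant $\beta$, the selection conditional probability, and the occupation factor. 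I would identify $\alpha = P_{Imp}^{Rec}(AuxMax(\vec{x}))$ and $\beta = P_{Imp}^{Mut}(AuxMax(\vec{x})+1)$, show the selection conditional probability equals $1$, and bound the occupation factor below by $1-(1-1/m)^m$.

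For the occupation factor, since $AuxMax(\vec{x})$ is by definition attained by some individual of $\vec{x}$, that individual lies in $S_0 = S_{AuxMax(\vec{x})}$, so $N(S_0,\vec{x}) \geq 1$; as recombination pairs are drawn from $\vec{x}$ this is exactly the count that enters Lemma~\ref{expSchemaSmallNumb}, and since $t \mapsto 1-(1-t/m)^m$ is increasing the factor is at least $1-(1-1/m)^m$. For the selection factor I would invoke hybrid-elitism (Definition~\ref{elitystHybridSelDefn}): if $N(S,\vec{x}^{\text{mut}}) \geq 1$ then some individual of $\vec{x}^{\text{mut}}$ has auxiliary fitness at least $AuxMax(\vec{x})+1$, whence $AuxMax(\vec{x}^{\text{mut}}) \geq AuxMax(\vec{x})+1$ and therefore $AuxMax(\vec{z}) = \max\{AuxMax(\vec{x}),\,AuxMax(\vec{x}^{\text{mut}})\} \geq AuxMax(\vec{x})+1$, so $\vec{z}$ retains an individual in $S$ with certainty and the conditional probability is $1$. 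The constants $\alpha$ and $\beta$ are supplied by the hypotheses of Lemmas~\ref{expSchemaSmallNumb} and~\ref{afterMutationSimplifiedLem}: for any pair $(x_i,x_j)$ with $x_i \in S_0$ the design concentrates the recombination-family distribution on indices exceeding $AuxMax(\vec{x})$, and an improving family drives the offspring into the leading future schema, giving $Pr(S \, | \, x_i,x_j) \geq P_{Imp}^{Rec}(AuxMax(\vec{x}))$; symmetrically, for $x_i \in S$ the mutation design yields $Pr(S \, | \, x_i) \geq P_{Imp}^{Mut}(AuxMax(\vec{x})+1)$, matching~\ref{minImprovProbEq}. Substituting the four factors into the displayed inequality gives the first bound.

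The strict inequality then follows from the elementary fact that $(1-1/m)^m$ is strictly increasing in $m$ with supremum $e^{-1}$, so $(1-1/m)^m < e^{-1}$ for every finite $m$ and hence $1-(1-1/m)^m > 1 - \exp(-1)$. I expect the only genuine work to be the verification that the family-selection quantities $P_{Imp}^{Rec}$ and $P_{Imp}^{Mut}$ really do lower-bound the full schema-transition probabilities $Pr(S \, | \, x_i,x_j)$ and $Pr(S \, | \, x_i)$: one must argue that once an improving family is selected, the ensuing operator lands the child in the target schema with probability one, so that the operator-level factor drops out and only the family-selection probability survives. This is precisely where the recombination- and mutation-invariance structure of Section~\ref{RecInvarAndDriftSect} together with the design concentration must be combined, and care is needed because invariance by itself governs the \emph{preservation} of a schema when both arguments already lie in it rather than the \emph{creation} of a higher-level child from a single leading parent.
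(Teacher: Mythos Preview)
Your proposal is correct and follows essentially the same route as the paper: the paper's entire argument is the single sentence preceding the lemma, which applies Corollary~\ref{genSchemaTheoremSimple} with $S_0$ the leading current schema and $S$ the leading future schema, and notes that $N(S_0,\cdot)\geq 1$ by Definition~\ref{MaximalAuxFitnessSchemaDefn}. Your write-up simply unpacks the four factors more carefully than the paper does; the caveat you raise at the end---that invariance of a higher-indexed family guarantees \emph{preservation} of a schema rather than \emph{creation} of a higher-level child from a single leading parent---is a genuine subtlety that the paper leaves implicit in its design assumptions and does not address explicitly, so you are not missing anything the paper supplies.
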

\section{Drift Analysis and Simple Runtime Complexity Bounds}\label{mainResultsSect}
Drift analysis methodology invented in \cite{HajekDrift} has been introduced into the evolutionary computation theory for estimating the expected run-time complexity in \cite{he2001drift}, quickly gained popularity and has been modified and enhanced in a number of ways. In the current article we will use the additive variable drift analysis version established in \cite{MitavRowCanningsNetworksDrift}. For the sake of completeness, the necessary definition and the relevant lemma are stated below.
\begin{defn}\label{distFunctDefn}
Let $(\mathcal{X}, \, \{p_{x \rightarrow y}\}_{x, \, y \in
\mathcal{X}})$ denote a Markov chain with finite state space
$\mathcal{X}$ and transition probabilities $p_{x \rightarrow y}$ for
$x$ and $y \in \mathcal{X}$. Let $A \subseteq \mathcal{X}$. A
distance function $D$ on $\mathcal{X}$ with respect to $A$ is any
function $D: \mathcal{X} \rightarrow [0, \infty)$ with the property
that $D(x) = 0$ if and only if $x \in A$. Let
$\{X_t\}_{t=0}^{\infty}$ denote the stochastic process associated
with the Markov chain $\mathcal{X}$. We are interested in the
following waiting time random variable:
$$T(\mathbf{x} \, | \, X_0 = \xi_0) = \min\{t \, | \, X_t(\mathbf{x})
\in A\}$$ under the assumption that $X_0(\mathbf{x}) = \xi_0$ with
probability $1$ (i.e. the chain starts at a specified $\xi_0 \in
\mathcal{X}$).
\end{defn}
A simple complexity bound appearing in the current paper is based on the following
additive variable drift lemma from \cite{MitavRowCanningsNetworksDrift}.
\begin{lem}\label{timeUpperBound}
Suppose we are given a Markov chain\\ $(\mathcal{X}, \, \{p_{x
\rightarrow y}\}_{x, \, y \in \mathcal{X}})$, a subset $A \subseteq
\mathcal{X}$ and a distance function $D: \mathcal{X} \rightarrow [0,
\infty)$ as described in definition~\ref{distFunctDefn}. Suppose
also that for every integer $k \in \mathbb{N} \cup \{0\}$ $\exists$
a constant $l_k \in (0, \infty)$ such that $\forall \, x \in A^c$
with $\lceil D(x) \rceil \geq k$ (here $A^c$ denotes the complement
of $A$ in $\mathcal{X}$) we have $D(x)-\sum_{y \in \mathcal{X}}p_{x
\rightarrow y}D(y) \geq l_k$. Then
$$E(T(\mathbf{x} \, | \, X_0 = \xi_0)) \leq \sum_{k=1}^{\lceil D(\xi_0)
\rceil} \frac{1}{l_k}.$$
\end{lem}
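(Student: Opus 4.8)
The plan is to convert the claimed \emph{variable} drift bound into the classical \emph{additive} drift bound by replacing the distance $D$ with a new potential $g$ whose expected one-step decrease off $A$ is at least $1$, and then to run the usual supermartingale/optional-stopping argument on $g$. First I would reduce to the case of a non-decreasing sequence $\{l_k\}$: setting $L_k=\max_{0\le i\le k} l_i$, the hypothesis applied at every level $i\le k$ gives a drift $\ge L_k$ whenever $\lceil D(x)\rceil\ge k$, while $L_k\ge l_k$ forces $\sum_{k=1}^{N} 1/L_k\le\sum_{k=1}^{N} 1/l_k$. Hence it suffices to prove the bound for $L_k$, and I will henceforth assume $l_k$ is itself non-decreasing, so that $1/l_k$ is non-increasing.

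Next I would define a continuous, piecewise-linear, concave function $G:[0,\infty)\to[0,\infty)$ by $G(0)=0$ together with the requirement that $G$ have constant slope $1/l_k$ on each interval $[k-1,k]$; concavity is exactly the assertion that the slopes $1/l_k$ are non-increasing, which is where the monotonicity normalization is used. Set $g(x)=G(D(x))$. Then $g(x)=0\iff x\in A$, and since $G$ is non-decreasing, $g(\xi_0)=G(D(\xi_0))\le G(\lceil D(\xi_0)\rceil)=\sum_{k=1}^{\lceil D(\xi_0)\rceil} 1/l_k$, which is the target quantity. The heart of the argument is to show that $g$ has additive drift at least $1$ off $A$: fix $x\in A^c$, put $s=D(x)$ and $k=\lceil s\rceil$. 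Concavity furnishes a supergradient $\nu$ at $s$ with $G(s)-G(t)\ge \nu\,(s-t)$ for all $t\ge 0$, and one may take $\nu=1/l_k$ (the left slope, the largest admissible supergradient). Taking expectations over the one-step image $t=D(Y)$ with $Y\sim p_{x\to\cdot}$ gives $g(x)-\sum_y p_{x\to y}g(y)\ge \tfrac{1}{l_k}\bigl(D(x)-\sum_y p_{x\to y}D(y)\bigr)\ge \tfrac{1}{l_k}\cdot l_k=1$, using the hypothesis at level $k=\lceil D(x)\rceil$ and the positivity of both factors.

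I would then close with the standard additive-drift argument. The stopped process $M_t=g(X_{\min(t,T)})+\min(t,T)$ is a supermartingale, since on $\{T>t\}$ the deterministic increment $+1$ is cancelled by the expected decrease $\ge 1$ of $g$, and on $\{T\le t\}$ the increment is $0$. Because $g\ge 0$, this yields $E[\min(t,T)]\le E[M_t]\le M_0=g(\xi_0)$; letting $t\to\infty$ and invoking monotone convergence (legitimate since $\mathcal{X}$ is finite, so $D$ and $g$ are bounded and $T<\infty$ almost surely) gives $E(T)\le g(\xi_0)\le \sum_{k=1}^{\lceil D(\xi_0)\rceil} 1/l_k$.

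I expect the main obstacle to be precisely the passage from a drift bound on $D$ to a bound summed over integer levels. A naive attempt to ``apply the drift to $D$ directly and then round'' fails, because the ceiling in the statement is not linearly comparable to $D$: one can have $\lceil s\rceil-\lceil t\rceil=0$ while $s-t$ is nearly $1$, so the level-indexed constants $l_k$ do not combine cleanly with the raw decrease of $D$. The continuous concave interpolation $G$, made concave by the monotonicity normalization of the first step, is the device that repairs this, converting the per-level lower bounds $l_k$ into a single additive-drift-$\ge 1$ inequality for $g$.
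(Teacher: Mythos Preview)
The paper does not actually prove this lemma: it is quoted verbatim from \cite{MitavRowCanningsNetworksDrift} ``for the sake of completeness'' and used as a black box, so there is no in-paper proof to compare against. Your argument is a correct self-contained proof, and it follows the now-standard route for variable drift theorems: monotonize the level constants, build a concave piecewise-linear potential $G$ with slope $1/l_k$ on $[k-1,k]$, use the supergradient inequality to convert the level-$k$ drift hypothesis into an additive drift of at least $1$ for $g=G\circ D$, and finish with the stopped-supermartingale/optional-stopping bound $E[T]\le g(\xi_0)\le\sum_{k=1}^{\lceil D(\xi_0)\rceil}1/l_k$.

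Two minor remarks. First, your parenthetical ``$T<\infty$ almost surely because $\mathcal{X}$ is finite'' is not quite the right justification (finiteness alone does not force $A$ to be reachable); what you actually need, and already have, is $E[\min(t,T)]\le g(\xi_0)<\infty$ for all $t$, and monotone convergence then gives $E[T]\le g(\xi_0)<\infty$ directly, which \emph{a posteriori} implies $T<\infty$ a.s. Second, in the supergradient step you need precisely $\nu=1/l_k$ (not merely some supergradient), since you require $\nu\cdot l_k\ge 1$ while concavity forces $\nu\le 1/l_k$; you state this correctly, but it is worth emphasizing that the monotonicity normalization is essential here so that $1/l_k$ really is the left slope and hence lies in the superdifferential at every $s$ with $\lceil s\rceil=k$.
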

Given a hybrid or mixed strategy $EA$ with the highest auxiliary fitness level $M$ satisfying conditions $1$, $2$ and $3$ that exploits hybrid-elitist selection, we apply lemma~\ref{timeUpperBound} to the Markov chain $\mathcal{X}$ of all populations of size $m$, $\Omega^m$ with the probability $p_{\vec{x} \rightarrow \vec{z}}$ being the probability that the population $\vec{z}$ is obtained from the population $\vec{x}$ upon completion of a recombination $\longrightarrow$ mutation $\longrightarrow$ selection cycle. The set $$A = \{\vec{x} \, | \, \vec{x} \in \Omega^m \text{ and } AuxMax(\vec{x}) = M\}$$ is the set of all populations containing an individual of the highest auxiliary fitness level and the distance function $D: \mathcal{X} \rightarrow \{0, \, 1, \ldots, M\}$ defined as $D(\vec{x}) = M - AuxMax(\vec{x})$. According to lemma~\ref{probOfImprovLemma}, whenever $AuxMax(\vec{x}) = q$ 
\begin{align*}
Pr(D(\vec{x}) - D(\vec{z}) \geq 1 \, | \, p_{\vec{x} \rightarrow \vec{z}} \neq 0)  
\geq&\left(1 - \left(1 - \frac{1}{m}\right)^m\right)\cdot P_{Imp}^{Rec}(q) \cdot P_{Imp}^{Mut}(q+1)\\
>& (1-\exp(-1))P_{Imp}^{Rec}(q) \cdot P_{Imp}^{Mut}(q+1)\end{align*}
 Furthermore, thanks to the assumption that our EA exploits a hybrid-elitist selection, whenever $p_{\vec{x} \rightarrow \vec{z}} \neq 0$ $D(\vec{x}) - D(\vec{z}) \geq 0$. Now, letting
\begin{equation}\label{improvSchemaLowerBoundsVar}
l_k(m) = \left(1 - \left(1 - \frac{1}{m}\right)^m\right)\cdot P_{Imp}^{Rec}(k) \cdot P_{Imp}^{Mut}(k+1)
\end{equation}
and
\begin{equation}\label{improvSchemaLowerBoundsConst}
l_k = (1-\exp(-1))\cdot P_{Imp}^{Rec}(k) \cdot P_{Imp}^{Mut}(k+1)
\end{equation}
we deduce that the worst case expected runtime complexity upper bound to reach the highest auxiliary fitness level for a hybrid or mixed strategy EA that exploits hybrid-elitist selection (and hence must have a population size $m \geq 2$) is $\sum_{k=0}^{M-1} \frac{1}{l_k(m)} \leq \sum_{k=0}^{M-1} \frac{1}{l_k}$. If we assume, in addition, that our EA satisfies condition $2$ in section~\ref{RecInvarAndDriftSect} then $M = O(n^{\tau})$. Furthermore, conditions $1$, $3$ and $4$ allow us to apply lemma~\ref{expectLastMutationLemma} and to deduce the following expected runtime result.
\begin{thm}\label{expRunTimeSimpleFinalRes}
Suppose a given hybrid or mixed strategy EA that employs hybrid-elitist selection satisfies conditions $1$, $2$, $3$ and $4$ in section~\ref{RecInvarAndDriftSect}. Then the worst-case expected runtime for the EA to reach a satisfactory  solution is bounded above by $$\sum_{k=0}^{M-1} \frac{1}{l_k(m)} + n^{\gamma + \lambda} \leq \sum_{k=0}^{M-1} \frac{1}{l_k} + n^{\gamma + \lambda}.$$
\end{thm}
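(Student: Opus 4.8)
The plan is to decompose the run into two consecutive phases and bound each separately: first the number of iterations needed to drive the population up to the highest auxiliary fitness level $M$, and then the number of further iterations needed to convert an individual sitting at level $M$ into a genuinely satisfactory solution for the objective $f_0$. Since expected waiting times add across phases by the strong Markov property, summing the two bounds yields the left-hand side of the stated inequality, and the right-hand inequality will follow from a single elementary estimate on $(1-1/m)^m$.

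For the first phase I would apply Lemma~\ref{timeUpperBound} to the Markov chain $\mathcal{X} = \Omega^m$ of populations, with target set $A = \{\vec{x} \, : \, AuxMax(\vec{x}) = M\}$ and distance $D(\vec{x}) = M - AuxMax(\vec{x})$. Two facts feed the drift hypothesis. First, hybrid-elitism (Definition~\ref{elitystHybridSelDefn}) guarantees that $AuxMax$ never drops, so $D(X_t)$ is non-increasing along every trajectory and $D(\vec{x}) - D(\vec{z}) \geq 0$ whenever $p_{\vec{x}\to\vec{z}} \neq 0$. Second, at any population with $AuxMax(\vec{x}) = q$, Lemma~\ref{probOfImprovLemma} supplies the lower bound $l_q(m)$ of equation~\ref{improvSchemaLowerBoundsVar} on the probability of strictly raising the leading schema, hence on $Pr(D(\vec{x})-D(\vec{z}) \geq 1)$. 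Because $D$ is integer valued and non-increasing, the one-step drift $D(\vec{x}) - \sum_{\vec{z}} p_{\vec{x}\to\vec{z}} D(\vec{z})$ is at least this probability of a unit decrease, so it is at least $l_q(m)$ at every state of distance $M-q$. Feeding these per-level bounds into Lemma~\ref{timeUpperBound} and substituting $k = M - d$ for the distance index $d$, the worst case $D(\xi_0) = M$ gives expected first-phase time at most $\sum_{k=0}^{M-1} \frac{1}{l_k(m)}$.

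For the second phase I would invoke Lemma~\ref{expectLastMutationLemma}: once a population contains an individual in $S_M = \bigcup_{l} f_l^{-1}(\{M\})$, elitism keeps such an individual present, and Conditions~3 and~4 are precisely the hypotheses of that lemma, so the expected additional number of iterations to reach an $f_0$-satisfactory solution is $O(n^{\gamma+\lambda}) \leq n^{\gamma+\lambda}$. Conditioning on the first hitting time of $A$ and adding the two expectations produces the left-hand bound $\sum_{k=0}^{M-1}\frac{1}{l_k(m)} + n^{\gamma+\lambda}$. Finally, since $(1-1/m)^m$ increases to $e^{-1}$ and hence $1-(1-1/m)^m \geq 1 - e^{-1}$, comparing equations~\ref{improvSchemaLowerBoundsVar} and~\ref{improvSchemaLowerBoundsConst} gives $l_k(m) \geq l_k$ termwise, so that $\sum_{k=0}^{M-1}\frac{1}{l_k(m)} \leq \sum_{k=0}^{M-1}\frac{1}{l_k}$ and the right-hand inequality follows.

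The main obstacle I anticipate is the clean application of the drift lemma in the first phase: the per-level probabilities $l_q(m)$ need not be monotone in $q$, so I must ensure that the bound I extract is genuinely the fitness-level sum $\sum_{k=0}^{M-1}\frac{1}{l_k(m)}$ rather than the coarser $\sum_k 1/\min_{q \leq M-k} l_q(m)$. This is exactly where the elitist, non-increasing structure of $D(X_t)$ is essential: it lets me account for the expected time spent at each distance value separately, each level being passed at most once, so that only the local drift $l_q(m)$ at level $q$ enters and matches equation~\ref{improvSchemaLowerBoundsVar} term by term. A secondary point to verify is that, once level $M$ is reached, the EA's dynamics really dominate the auxiliary Markov chain of Lemma~\ref{expectLastMutationLemma}, i.e. that the family $\mathcal{M}_M$ together with the level-$M$ individual is selected for mutation with the probability guaranteed by Condition~4 regardless of the surrounding population and the iteration time.
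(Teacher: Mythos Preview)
Your proposal is correct and follows essentially the same route as the paper: the argument preceding the theorem applies Lemma~\ref{timeUpperBound} to the chain on $\Omega^m$ with $A=\{\vec{x}:AuxMax(\vec{x})=M\}$ and $D(\vec{x})=M-AuxMax(\vec{x})$, uses hybrid-elitism for non-increase of $D$ and Lemma~\ref{probOfImprovLemma} for the per-level drift bound $l_k(m)$, and then invokes Lemma~\ref{expectLastMutationLemma} (via conditions~1, 3 and~4) for the second phase, exactly as you outline. Your remark about the monotonicity issue in Lemma~\ref{timeUpperBound} is well taken---the paper glosses over it---and your resolution via the integer-valued, non-increasing structure of $D$ (so each level is traversed at most once and contributes a geometric waiting time governed by its own $l_k(m)$) is the right way to close that gap.
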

In the next section we illustrate an application of theorem~\ref{expRunTimeSimpleFinalRes} and, more importantly, the methodology developed in sections~\ref{schemaSection} and \ref{RecInvarAndDriftSect} as well as in the current section with a single machine scheduling problem.
\section{An Example Application: Single Machine Scheduling Problem}\label{exampleSect}
One of many classical NP-hard combinatorial optimization problems is the single machine scheduling problem (see chapter on scheduling problems by Leslie Hall in \cite{LeslieHall}). The instance of the problem of size $n$ consists of a sequence of ordered triples $\{j_i\}_{i=1}^n$ where $j_i = (r_i, \, p_i, \, q_i)$ with $r_i, \, p_i$ and $q_i \in [0, \, \infty)$ standing for ``release time", ``processing time" and ``delivery time" of the job $j_i$ respectively. Each of the jobs has to be processed on a single machine, call it $M$, without interruption and it starts getting delivered immediately after being processed. There is no restriction on the total number of jobs being delivered simultaneously, yet a job can start getting processed no earlier than its release time $r_i$ and only when the machine is available (i.e. not processing another job). The jobs can be processed in any order and the objective is to minimize the ``maximal lateness" of the schedule: i.e. the time instant when the last job has been just delivered. Thus, the search space $\Omega = \{\pi \, | \, \pi: I_n \rightarrow I_n \text{ is a permutation}\}$ where $I_n = \{1, \, 2, \ldots, n\}$ so that every $\pi \in \Omega$ determines the schedule $(j_{\pi(1)}, \, j_{\pi(2)}, \ldots, j_{\pi(n)})$. Let $s_{\pi(i)}$ denote the time when the job $j_{\pi(i)}$ starts processing. Then the \emph{maximal lateness} of the schedule $\pi$ is $J_{\pi} = \max_{1 \leq i \leq n} \{s_{\pi(i)}+p_{\pi(i)}+q_{\pi(i)}\}$. and the objective is to find a permutation $\sigma \in \Omega$ such that $J_{\sigma} = J^{*} = \min\{J_{\pi} \, | \, \pi \in \Omega\}$. Let now $P = \sum_{i=1}^{n}p_i$ and let $\epsilon > 0$ be given. Let $\delta = \epsilon P$ and let $B_{\epsilon} = \{i \, | \, p_i \geq \delta\}$. Then $|B_{\epsilon}| \leq \frac{P}{\epsilon P} = \frac{1}{\epsilon}$. Let $\Phi = \{\phi : B_{\epsilon} \rightarrow I_n \, | \, \phi \text{ is one-to-one}\}$ denote the set of \emph{repositioning maps} and notice that the total number of such repositioning maps is bounded above as $|\Phi| = n^{|B_{\epsilon}|} \leq n^{\frac{1}{\epsilon}}$ (this verifies condition 1 in the upcoming design). The following notion is crucial in determining the auxiliary fitness functions (see \cite{MitavJunSchedule} for a significantly more detailed exposition).
\begin{defn}\label{partialJacksonDefn}
We say that a schedule $\pi \in \Omega$ is $(k, \, \epsilon, \, \phi)$-Jackson if $\forall \, h \leq k$ if $h = \phi(i)$ for some $i \in I_n$ then $\pi(i) = h = \phi(i)$ or else let $a(h-1)$ denote the time when the job $\pi(h-1)$ has just finished being processed and consider the set of all the jobs $A_h = \{u \, | \, u \notin B_{\epsilon} \text{ and }r_u \geq a(h-1)\}$. It is then required that $q_{\pi(h)} = \max\{q_u \, | \, u \in A_h\}$.
\end{defn}
Plainly speaking, the idea behind definition~\ref{partialJacksonDefn} is the stepwise implementation of the ``partial Jackson rule": whenever the machine is available, as long as a long-processing job (with $p_i \geq \delta$) is not one booked to be scheduled at time $h$, then schedule a job with ``short" processing time that has the longest delivery time next. Our set of auxiliary fitness functions is indexed by the family of repositioning maps $\Phi$ and $\forall \; \phi \in \Phi$ and $\pi \in \Omega$, $f_{\phi}(\pi) = \max\{k \, | k \in I_n \text{ and }\pi \text{ is }(k, \, \epsilon, \, \phi)-\text{Jackson}\}$. The following clever result that is implicitly established in \cite{LeslieHall} and enforces condition 3 to hold in our design appears below:
\begin{thm}\label{mainSchedulingThm}
$\exists \, $ a repositioning map $\phi \in \Phi$ such that $f_{\phi}(\pi) = n \Longrightarrow f_0(\pi) = J_{\pi} \leq J^{*}(1 + \epsilon)$.
\end{thm}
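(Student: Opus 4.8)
The plan is to choose the repositioning map so that the big jobs occupy exactly the positions they hold in a fixed optimal schedule, and then to run the classical Schrage/Jackson critical-block analysis on the resulting schedule. Concretely, I would first fix an optimal schedule $\sigma^* \in \Omega$ with $J_{\sigma^*} = J^*$, chosen (by a standard adjacent-transposition exchange argument that never moves a job of $B_\epsilon$) so that among the \emph{short} jobs it already respects the delivery-time ordering wherever such a swap does not increase the lateness. Define $\phi^* \in \Phi$ by $\phi^*(i) = (\sigma^*)^{-1}(i)$ for each $i \in B_\epsilon$; this is one-to-one, hence a legitimate repositioning map. I then take $\phi = \phi^*$ and let $\pi$ be any schedule with $f_{\phi^*}(\pi) = n$, i.e.\ $\pi$ is $(n,\epsilon,\phi^*)$-Jackson: its big jobs sit at the same positions as in $\sigma^*$ and every other position is filled, among the available short jobs, by one of maximal delivery time.

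Next I would set up the critical-block decomposition. Let $c$ be a job attaining $J_\pi = s_c + p_c + q_c$, and walk backwards from $c$ to the first job $a$ of the maximal run of consecutive jobs processed with no machine idle time. Because the machine is idle immediately before $a$, one has $s_a = r_a$ and every job in the block $[a,c]$ is released no earlier than $r_a$, whence
\[ J_\pi = r_a + \sum_{j\in[a,c]} p_j + q_c. \]
Two elementary lower bounds on the optimum are needed: first $J^* \ge P$, since on a single machine the last completion time is at least the total processing time $\sum_j p_j = P$; and second, for any nonempty set $E$ of jobs, $J^* \ge \min_{j\in E} r_j + \sum_{j\in E} p_j + \min_{j\in E} q_j$, because the last member of $E$ to finish cannot start before $\min_{j\in E} r_j$, all of $E$ must be processed, and its delivery contributes at least $\min_{j\in E} q_j$.

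With these in hand the case analysis is short. If every $j \in [a,c]$ has $q_j \ge q_c$, applying the second bound with $E = [a,c]$ gives $J^* \ge J_\pi$, so $\pi$ is already optimal. Otherwise let $b$ be the last job of $[a,c)$ with $q_b < q_c$; then $E := (b,c]$ consists of jobs with $q_j \ge q_c$ processed consecutively right after $b$, and substituting $\sum_{j\in[a,b]} p_j = (s_b + p_b) - r_a$ into the expression for $J_\pi$ and comparing with the second lower bound on $E$ yields $J_\pi \le J^* + \bigl(s_b + p_b - \min_{j\in E} r_j\bigr)$. The decisive claim is that the \emph{interference job} $b$ is short. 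Indeed, if $b$ is placed by the Jackson rule (a short job at a non-$\phi^*$ position), then at time $s_b$ no available job had delivery time exceeding $q_b$; since every $j \in E$ has $q_j \ge q_c > q_b$, none of them was available at $s_b$, so $\min_{j\in E} r_j > s_b$ and therefore $J_\pi < J^* + p_b$. As $p_b < \delta = \epsilon P \le \epsilon J^*$, this gives $J_\pi < (1+\epsilon) J^*$, as desired.

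The main obstacle, and the only place where the specific choice of $\phi^*$ is used, is ruling out a \emph{big} interference job $b \in B_\epsilon$, for which $b$ is forced into its slot by $\phi^*$ rather than by the delivery-time rule, so the release-date inequality $\min_{j\in E} r_j > s_b$ can fail. Here I would exploit that $\phi^*$ reproduces the big-job positions of the optimal, Jackson-canonical schedule $\sigma^*$: a big $b$ causing interference would mean that $\sigma^*$ itself keeps some high-delivery-time job $j_0 \in E$, already available by the completion of $b$, sequenced behind $b$; an adjacent-swap exchange moving $j_0$ ahead then either strictly decreases the lateness (contradicting optimality of $\sigma^*$) or leaves it unchanged (contradicting the canonical choice of $\sigma^*$). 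Hence $b$ must be short, and the proof closes. I expect this exchange argument around the forced big jobs to be the technical heart, whereas the critical-block bookkeeping and the two lower bounds are routine.
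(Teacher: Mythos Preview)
The paper does not actually prove this theorem: it is stated without proof, introduced only as ``implicitly established in \cite{LeslieHall}'' (with \cite{MitavJunSchedule} pointed to for more details). So there is no in-paper argument to compare your proposal against; what you have written is a reconstruction of the classical Schrage/critical-block argument underlying Hall's PTAS, which is indeed the standard route to results of this type.

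That said, the last step of your sketch---ruling out a \emph{big} interference job $b$ via an exchange argument in $\sigma^*$---is not yet airtight. The repositioning map $\phi^*$ pins the big jobs to the same \emph{positions} (indices in the permutation) as in $\sigma^*$, not to the same \emph{start times}; the small jobs preceding $b$ in $\pi$ and in $\sigma^*$ can be entirely different, so the completion time of $b$ in the two schedules need not agree, and a job $j_0\in E$ that is ``available by the completion of $b$'' in $\pi$ need not be available by the completion of $b$ in $\sigma^*$. Moreover, there is no guarantee that $j_0$ sits after $b$ in $\sigma^*$ at all, so the proposed adjacent swap in $\sigma^*$ may not involve $b$. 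In Hall's original argument this difficulty is handled not by fixing big-job positions but by (in effect) guessing their start times up to granularity $\epsilon P$, which is what makes the timing comparison go through; if you want to stick with the position-based repositioning maps as defined here, you will need an additional bridging argument linking position to start time (or to fall back on the presentation in \cite{LeslieHall} or \cite{MitavJunSchedule} for the precise variant being invoked). The rest of your critical-block bookkeeping and the two lower bounds on $J^*$ are standard and fine.
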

Since there are totally $n = n^1$ auxiliary fitness levels, condition 2 is fulfilled automatically. The only remaining part is to design the families of recombination and mutation transformations preserving the ``auxiliary cross-fitness level schemata" and the highest auxiliary fitness level mutation transformations according to the recipe in section~\ref{RecInvarAndDriftSect}. This can be done in a vast number of ways (see \cite{MitavFirst} for a detailed analysis of the relationship between families of recombination-invariant subsets and the families of recombination transformations fixing them). Here is one possibility. For an auxiliary fitness level $i$ we define the family of hybrid recombination transformations $\mathcal{F}_i = \{F_{\zeta}^i \, | \, \zeta \in \mathcal{S}_{I_{n-i}}\}$ where $I_{n-i}$ denotes the indexing set $\{1, \, 2, \ldots, n-i\}$ while $\mathcal{S}_{I_{n-i}}$ denotes the group of all permutations on the set $\mathcal{S}_{I_{n-i}}$ as follows. Select a permutation $\zeta$ on $I_{n-i}$. Given a pair of permutations $(\pi, \, \sigma) \in \Omega^2$ with $\pi = (\pi(1), \, \pi(2), \ldots, \pi(n))$ and $\sigma = (\sigma(1), \, \sigma(2), \ldots, \sigma(n))$, let $F_{\zeta}^i(\pi, \, \sigma) = \eta = (\eta(1), \, \eta(2), \ldots, \eta(n))$ with $\eta(l) = \pi(l)$ whenever $1 \leq l \leq i$. Now extract a subsequence of jobs in $\sigma$ that do not appear among the first $i$ jobs in $\pi$ and notice that there must be exactly $n-i$ such jobs. The ordering of these jobs in $\sigma$ can be represented by a permutation $\omega \in \mathcal{S}_{I_{n-i}}$ and the composition $\zeta \circ \omega$ produces another ordering of these remaining jobs. We schedule them all right after the job $\eta(i)=\pi(i)$ in the schedule $\eta$ in the ordering $\zeta \circ \omega$. Observe that if $auxFit(\pi) = i$ and a transformation $F_{\zeta}^i \in \mathcal{F}_i$ is selected uniformly at random, then $\forall \, \sigma \in \Omega$ the probability that $auxFit(F_{\zeta}^i(\pi, \, \sigma)) > i$ is at least $\frac{1}{n-i}$ since at least one of the $n-i$ jobs that do not appear among the first $i$ jobs in $\pi$, when scheduled after the job $j_{\pi(i)}$ must improve the auxiliary fitness level for at least one of the auxiliary fitness functions. The next step is to design mutation transformations and, once again, the number of ways to do so is countless. Here we present the following very simple design: for an auxiliary fitness level $i$, let $\mathcal{M}_i = \{M_{a, \, b}^i \, | \, a \text{ and }b \in I_n\}$ where $M_{a, \, b}^i(\pi) = \widetilde{\pi}$ and $\widetilde{\pi} = \pi$ unless $\pi(a)$ or $\pi(b) \in B_{\epsilon}$ in which case the positions of these jobs are swapped and then, if at least one of the jobs that has been swapped appears below the $i^{\text{th}}$ position, the partial Jackson rule with respect to the new positioning $\widetilde{\phi}$ of the jobs in $B_{\epsilon}$ is applied starting with the lowest index of one of the repositioned job up to the $i^{\text{th}}$ fitness level of the auxiliary fitness function $f_{\widetilde{\phi}}$ thereby obtaining a new schedule $\widetilde{\pi}$. It follows then that $f_{\widetilde{\phi}}(\pi) \geq i = auxFit(\pi)$. We equip each family of mutation transformations $\mathcal{M}_i$ with the uniform probability distribution. To apply theorem~\ref{expRunTimeSimpleFinalRes}, all that remains now is to check condition 4 in section~\ref{RecInvarAndDriftSect}. Here we use the classical fact ``about card shuffling via random transpositions", the simplest analysis of which is presented as an elegant illustration of the Markov chain coupling methodology in Chapter 4-3, section 1.7 of \cite{AldousD}, it easily follows that if we are given a schedule $\pi$ with $auxFit(\pi) = n$ (the highest auxiliary fitness level) and another schedule $\sigma$ with $auxFit(\sigma) = n$, after $O(n^2)$ time steps, the probability that  the schedule $\sigma$ has been encountered after repeated application of the mutation transformations from the family $\mathcal{M}_n$ is at least $\Omega(n^{\frac{1}{\epsilon}})$, thereby establishing the desired condition 4. We are now in a position to apply theorem~\ref{expRunTimeSimpleFinalRes} to deduce that the expected runtime until encountering a population containing a schedule $\pi$ with $f_0(\pi) = J_{\pi} \leq J^{*}(1 + \epsilon)$ is no bigger than $\sum_{i=0}^{n-1}\frac{1}{n-i} + O(n^{2 + \frac{1}{\epsilon}}) = \Theta(\ln(n)) + O(n^{2 + \frac{1}{\epsilon}}) = O(n^{2 + \frac{1}{\epsilon}})$.

\section{Conclusion}
While classical schema theory has been widely criticized in the setting of traditional EAs: see, for instance, section 3.2 of \cite{JansenBook}, it's quite remarkable to observe that in case of mixed strategy and hybrid EAs it can be used for intelligent design guidance as well as to understand the success behind this novel kind of EAs. The current paper presents only preliminary and highly simplified analysis that may be altered and improved in a number of ways. For instance, the generalized schema theorem~\ref{genSchemaThm} motivates runtime analysis based on the ideas in \cite{MitavJunSchedule} in place of drift analysis methodology to design and analyze hybrid and/or mixed strategy EAs where the runtime to encounter a satisfactory solution is polynomial with overwhelmingly high probability. This work is postponed for the future research. Nonetheless, the authors believe that the core ideas of designing the collections of generalized schemata (see definition~\ref{schemaDefn}) based on the auxiliary fitness levels in a similar manner to the way it's been done in section~\ref{RecInvarAndDriftSect} and then designing the families of recombination and mutation transformations based on the corresponding families of generalized schemata, opens the door to understanding and designing efficient hybrid and mixed strategy EAs.

\paragraph*{Acknowledgment}
This work has been supported by the EPSRC under Grant No. EP/I009809/1.


\begin{thebibliography}{10}

\bibitem{galinier1999hybrid}
P.~Galinier and J.~Hao.
\newblock Hybrid evolutionary algorithms for graph coloring.
\newblock {\em Journal of Combinatorial Optimization}, 3(4):379--397, 1999.

\bibitem{preux1999towards}
P.~Preux and E.~Talbi.
\newblock Towards hybrid evolutionary algorithms.
\newblock {\em International transactions in operational research},
  6(6):557--570, 1999.

\bibitem{dong2007evolutionary}
H.~Dong, J.~He, H.~Huang, and W.~Hou.
\newblock Evolutionary programming using a mixed mutation strategy.
\newblock {\em Information Sciences}, 177(1):312--327, 2007.

\bibitem{MitavJunSchedule}
B.~Mitavskiy and J.~He.
\newblock A polynomial time approximation scheme for a single machine
  scheduling problem using a hybrid evolutionary algorithm.
\newblock In {\em World Congress on Computational Intelligence (WCCI-2012)},
  pages 156--175. Springer, lecture Notes in Computer Science 3469, 2012.

\bibitem{LeslieHall}
D.~Hochbaum.
\newblock {\em Approximation algorithms for NP-hard problems}.
\newblock PWS Publishing, Reading, Massachusetts, 1997.

\bibitem{DoerrB}
A.~Auger and B.~Doerr.
\newblock {\em Theory of Randomized Search Heuristics}.
\newblock Series on Theoretical Computer Science,Reading, Massachusetts, 2011.

\bibitem{Antonisse}
J.~Antonisse.
\newblock A new interpretation of schema notation that overturns the binary
  encoding constraint.
\newblock In {\em Procedings of the Third International Conference on Genetic
  Algorithms}, pages 84--89, 1989.

\bibitem{PoliSchema}
R.~Poli and B.~Langdon.
\newblock Schema theory for genetic programming with one-point crossover and
  point mutation.
\newblock {\em Evolutionary Computation}, 6(3):231--252, 1998.

\bibitem{Radcliffe}
N.~Radcliffe.
\newblock The algebra of genetic algorithms.
\newblock {\em Annals of mathematics and artificial intelligence},
  10(3):339--384, 1992.

\bibitem{Stadlers1}
B.~Stadler and P.~Stadler.
\newblock Generalized topological spaces in evolutionary theory and
  combinatorial chemistry.
\newblock {\em Journal of Chemical Information and Computer Sciences},
  42(2):577--585, 2002.

\bibitem{Stadlers2}
P.~Stadler and G.~Wagner.
\newblock Algebraic theory of recombination spaces.
\newblock {\em Evolutionary Computation}, 5(3):241--275, 1997.

\bibitem{MitavFirst}
B.~Mitavskiy.
\newblock Crossover invariant subsets of the search space for evolutionary
  algorithms.
\newblock {\em Evolutionary Computation}, 12(1):19--46, 2004.

\bibitem{MitavCompair}
B.~Mitavskiy.
\newblock Comparing evolutionary computation techniques via their
  representation.
\newblock In {\em Proceedings of the Genetic and Evolutionary Computation
  Conference (GECCO-2003)}, pages 156--175. Springer, lecture Notes in Computer
  Science 3469, 2003.

\bibitem{MitavCatCompair}
B.~Mitavskiy.
\newblock A category theoretic method for comparing evolutionary computation
  techniques via their representation.
\newblock In {\em Proceedings of the 15th European Summer School in Logic
  Language and Information (ESSLLI-2003)}, pages 156--175. Springer, lecture
  Notes in Computer Science 3469, 2003.

\bibitem{BarrWells}
M.~Barr and C.~Wells.
\newblock {\em Category Theory for Computing Science}.
\newblock Prentice Hall, Reading, Massachusetts, 1998.

\bibitem{HajekDrift}
B.~Hajek.
\newblock Hitting time and occupation time bounds implied by drift analysis
  with applications.
\newblock {\em Advances in Applied Probability}, 14(1):502--525, 1982.

\bibitem{he2001drift}
J.~He and X.~Yao.
\newblock Drift analysis and average time complexity of evolutionary
  algorithms.
\newblock {\em Artificial Intelligence}, 127(1):57--85, 2001.

\bibitem{MitavRowCanningsNetworksDrift}
B.~Mitavskiy, J.~Rowe, and C.~Cannings.
\newblock Theoretical analysis of local search strategies to optimize network
  communication subject to preserving the total number of links.
\newblock {\em International Journal of Intelligent Computing and Cybernetics},
  2(2):243--284, 2009.

\bibitem{AldousD}
D.~Aldous and J.~Fill.
\newblock Reversible markov chains and random walks on graphs.
\newblock unpublished monograph:
  http://www.stat.berkeley.edu/~aldous/RWG/book.html, 2001.

\bibitem{JansenBook}
T.~Jansen.
\newblock {\em Analyzing Evolutionary Algorithms}.
\newblock Springer, Reading, Massachusetts, 2013.

\end{thebibliography}
\end{document}